\documentclass[11pt]{article}
\usepackage{graphicx} 

\usepackage{natbib}
\usepackage{times}
\usepackage{amsmath}
\usepackage{amssymb}
\usepackage{amsthm}
\usepackage{xcolor}
\usepackage{thm-restate}
\usepackage{mathtools}
\usepackage{thmtools}
\usepackage{enumitem}
\usepackage{circledsteps}
\usepackage{subcaption}
\usepackage{booktabs}
\usepackage{tikz}
\usetikzlibrary{arrows.meta,positioning,calc}
\usepackage{ifthen}
\newboolean{isSingleColumn}

\usepackage{bm}
\usepackage{tcolorbox}

\usepackage{algorithm}
\usepackage[noend]{algpseudocode}

\usepackage[colorlinks, linkcolor=magenta, citecolor=magenta]{hyperref}

\newcommand{\vx}{\mathbf{x}}
\newcommand{\vtheta}{\boldsymbol{\theta}}
\newcommand{\loglike}{\ell_i}        
\newcommand{\poploglike}{\bar{\ell}} 
\newcommand{\losslike}{f_i}          
\newcommand{\losspop}{F}             
\newcommand{\fulllike}{\mathcal{L}} 

\theoremstyle{plain}

\theoremstyle{definition}
\newtheorem{definition}{Definition}
\newtheorem{assumption}{Assumption}

\newtheorem{example}{Example}

\usepackage{bbm}

\newcommand{\pr}{\mathbb{P}}
\newcommand{\cL}{\mathcal{L}}
\newcommand{\truncexp}{\mathsf{TruncExp}}
\newcommand{\E}{\mathbb{E}}

\newcommand{\vbeta}{\boldsymbol{\beta}}
\newcommand{\expshort}{e^{\vx_i^\top\vtheta}}

\usepackage[nameinlink,capitalize]{cleveref}

\crefname{theorem}{Theorem}{Theorems}
\crefname{proposition}{Proposition}{Propositions}
\crefname{lemma}{Lemma}{Lemmas}
\crefname{definition}{Definition}{Definitions}
\crefname{assumption}{Assumption}{Assumptions}
\crefname{remark}{Remark}{Remarks}
\crefname{example}{Example}{Examples}

\makeatletter
\if@twocolumn
    \setboolean{isSingleColumn}{false}
\else
    \setboolean{isSingleColumn}{true}
\fi

\newcommand{\adaptiveHeader}[1]{%
    \ifthenelse{\boolean{isSingleColumn}}{%
        \paragraph{#1} 
    }{%
        \textbf{#1}    
    }%
}
\newcommand{\imgwidth}{\linewidth}
\newcommand{\tabwidth}{\linewidth}

\usepackage{fullpage}
\usepackage{authblk}

\title{Efficient Learning of Survival Analysis Models Under Outcome-Dependent Truncation and Censoring}

\author[1]{Itai Zilberstein\thanks{Equal contribution. Correspondence to \texttt{izilbers@cs.cmu.edu, ianagnos@cs.cmu.edu}.}}
\author[1]{Ioannis Anagnostides\protect\footnotemark[1]}
\author[2]{Arman Kilic}
\author[1,3]{Tuomas Sandholm}

\affil[1]{Department of Computer Science, Carnegie Mellon University, Pittsburgh, PA}
\affil[2]{Department of Surgery, Division of Cardiothoracic Surgery, Medical University of South Carolina, Charleston, SC}
\affil[3]{\small{Additional affiliations: Strategy Robot, Inc., Strategic Machine, Inc., Optimized Markets, Inc.}}

\begin{document}

\maketitle
\thispagestyle{empty}

\begin{abstract}
    Modern machine learning models are typically trained on data assumed to be representative of the predictive tasks the model will face in the future, but practical datasets violate this assumption. We focus on \textit{outcome-dependent truncation} in which an observation appears in the training data depending on its label. Our work connects two largely distinct literatures: efficient learning under truncation and survival analysis. 
    Existing methods for truncated survival analysis do not characterize the optimization landscape and instead rely on iterative procedures such as expectation maximization.
    We address this gap and prove that Cox regression can be efficiently learned under truncated and right-censored data using first-order optimization when the baseline hazard is known (despite nonconvexity).  In contrast, when the baseline hazard is learned jointly, we construct identifiable and correctly specified models whose negative log-likelihood contains strict suboptimal local minima. 
    On experiments with four survival datasets, we show that accounting for truncation reduces the prediction bias induced by naive Cox regression and extends to neural networks. We also show orders of magnitude speedups compared to standard expectation maximization.
\end{abstract}

\clearpage
\setcounter{page}{1}

\section{Introduction}

Modern machine learning models are typically optimized using large amounts of data that are assumed to be representative of the predictive tasks the model will face in the future. However, many practical datasets violate this assumption; observations are often selected non-randomly. If this selection is not accounted for during training, bias can propagate into a model's predictions~\citep{Vanderschueren23:Accounting}. When predictions inform high-stakes decisions, such as treatment decisions, organ allocation, and financial decision making, these biases can have adverse consequences.

\textit{Truncation} is a common form of selection bias in which the training data is filtered according to a known (or unknown) mechanism~\citep{Tsai87:Note,Breen96:Regression}. Truncation can take many forms, and we focus on \textit{outcome-dependent truncation}, in which the data is filtered based on the value of the label. Many different reasons can cause truncation to occur, including constrained data collection, study design, or legal restrictions. For example, a medical study starting today can only enroll patients who are still alive; patients who died beforehand are never observed. More generally, datasets collected based on a selection criterion can overrepresent individuals whose outcomes make them more likely to be selected and observed in training data.  Unlike censoring, where an observation remains in the dataset but its outcome is only partially observed (\textit{e.g.}, due to loss of follow-up), truncation removes the observation entirely. Naively fitting a predictive model to the observed samples can yield a model that performs well on the selected samples, but is biased for the underlying population. 

Survival analysis is an especially important setting in which truncation and censoring arise together. A survival model predicts a distribution over time until an event occurs for a patient, such as death, disease progression, or graft failure. 
Survival analysis is particularly relevant in transplantation, where post-transplant outcomes are observed only for patients who are selected to receive an organ, and selection depends in part on characteristics predictive of transplant benefit and survival~\citep{OPTN25:ContinuousDistribution,Gottlieb17:Lung,Kamath01:Model,Zilberstein26:Learning}. Patients predicted to have poor post-transplant outcomes may be less likely to receive a transplant. Conversely, patients predicted to live the longest following a transplant may be healthy enough to remain on the waitlist while more urgent candidates are prioritized, making them less likely to be transplanted.~\Cref{ex:bias} illustrates how a selection mechanism can bias a predictive model. 

\begin{example}[Bias from left truncation]
\label{ex:bias}
Consider a hypothetical study of graft survival following heart transplantation. Suppose a large historical registry contains transplant outcomes that can be used to estimate the baseline graft survival distribution. A new follow-up study begins three years later to collect additional donor and recipient covariates from patients in this registry. The data for this new study is left truncated: only patients who survive until enrollment can enter the study, while patients who die within the first three years are entirely absent.

We fit a Cox regression model to predict graft survival based on two newly collected covariates.  Because graft survival itself determines whether a patient appears in the sample, fitting a standard Cox model produces biased coefficients, as shown in~\Cref{fig:example}. Accounting for the truncation in the likelihood instead recovers the true coefficients. This bias can have important downstream consequences: in our example, the naively trained model substantially overestimates graft survival for some patients. This occurs even without censoring and under simple one-sided left truncation. 
\end{example}

\ifthenelse{\boolean{isSingleColumn}}{
    \renewcommand{\imgwidth}{0.4\linewidth}
}{
    \renewcommand{\imgwidth}{0.48\linewidth}
}

\begin{figure}[t]
    \centering
    \includegraphics[width=\imgwidth]{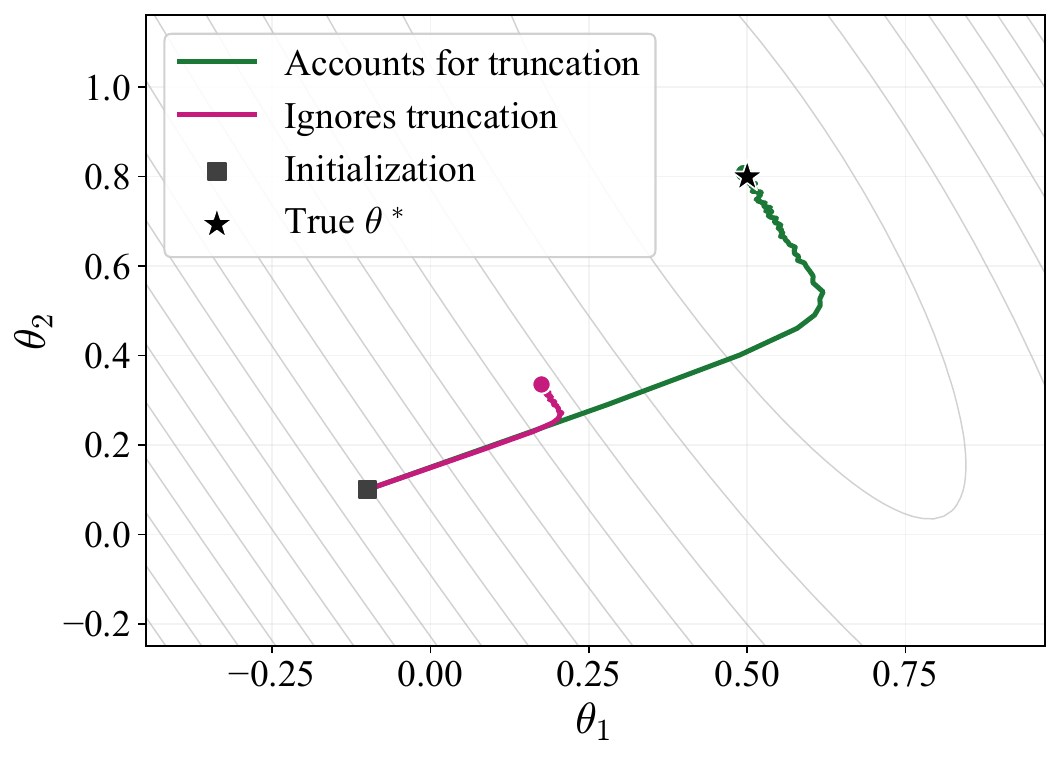}
    \includegraphics[width=\imgwidth]{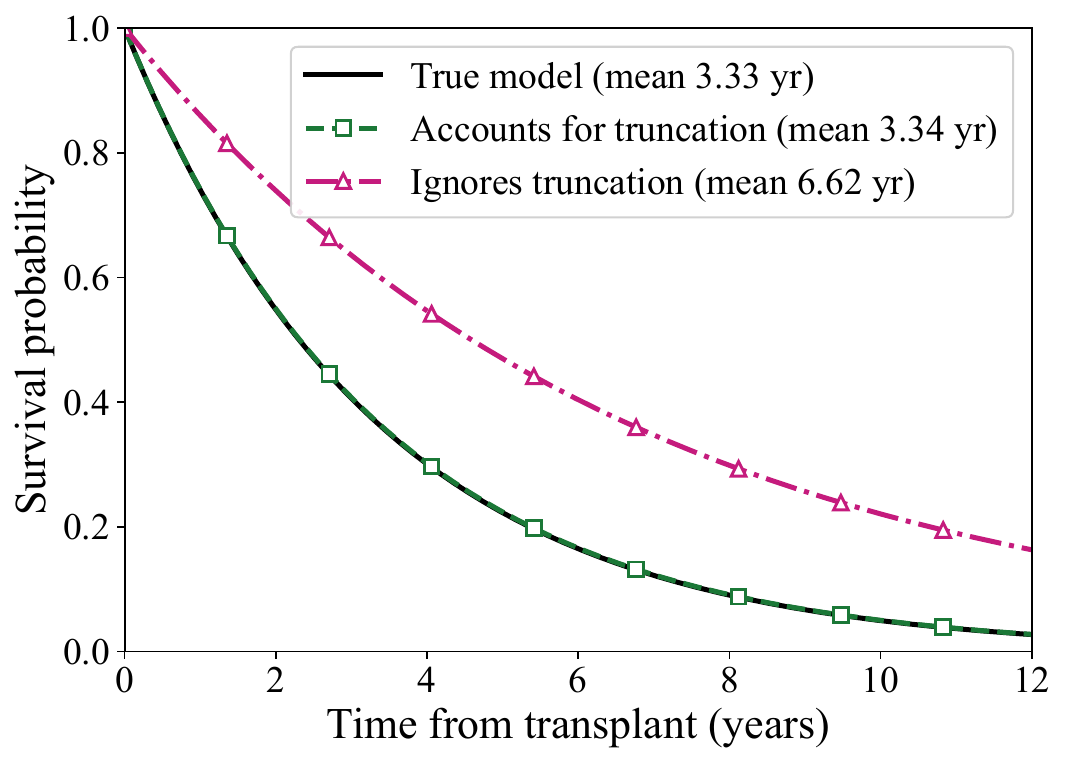}
    \caption{Illustrative example of learning a Cox model with truncated samples. When the likelihood accounts for truncation, SGD converges to the true regression coefficients, whereas the standard Cox likelihood converges to biased parameters (left). The resulting parameter bias can translate into substantial errors in predicted graft survival for individual patients (right).}
    \label{fig:example}
\end{figure}

There is a long literature in statistical estimation of survival models from truncated data, including left, right, and double truncation, biased sampling, and censoring~\citep{Rennert18:Cox,Rennert22:Cox,Shen19:Pseudo}. These methods provide consistent estimators, but largely focus on statistical identifiability and asymptotic properties rather than the global optimization problem. In practice, learning such a model can be computationally challenging, relying on methods such as \textit{expectation maximization (EM)}~\citep{Rennert18:Cox,Rennert22:Cox}, which can become costly in high-dimensional settings. In parallel to this body of work, machine learning researchers have developed computationally efficient methods for learning regression models and exponential families from truncated samples~\citep{Daskalakis19:Computationally,Ilyas20:Theoretical,Lee23:Learning}. As datasets increasingly grow in size and dimension, efficient methods for learning survival models under truncation is an important intersection of these two literatures. We provide a broader discussion of these two literatures in~\Cref{sec:related}.

\paragraph{Our contributions}
We study whether Cox regression can be learned efficiently from outcome-dependent truncated and right-censored data. We consider a general truncation mechanism in which an observation is retained probabilistically if it falls in one of multiple intervals. This captures standard left, right, and interval truncation as special cases, while also permitting disconnected multiple intervals, which may arise from merging multiple data collection processes.


Our main positive result shows that, despite nonconvexity, the population negative log-likelihood has a unique stationary point at the true parameter and its gradient always has a component pointing away from it when the baseline hazard is known (\Cref{thm:cox-population-uniqueness}). This implies convergence of projected \textit{stochastic gradient descent (SGD)} to the true parameter under standard conditions (\Cref{cor:cox-sgd-finite-time}). The analysis relies on a connection between Cox regression and weighted truncated exponential distributions that remains valid under right censoring.

We then show that these guarantees need not extend to an unknown baseline. In the worst case, the model can become non-identifiable (\Cref{ex:baseline-nonidentifiability}). We further show that even when the model is correctly specified, identifiable, and there is no censoring, jointly estimating the regression coefficient and baseline can create a suboptimal local minimum (\Cref{thm:finite-spurious-cox}). We also show that after optimizing over a class of baseline hazards for each regression coefficient, the resulting profiled population loss can be flat and suboptimal over an entire interval of incorrect parameters (\Cref{thm:flat-unknown-baseline-profile}). 

This characterization highlights when, and when not, Cox regression is efficiently learnable under truncation. A known or accurately estimated baseline hazard can arise naturally from external data. For example, a large disease registry may provide reliable population-level survival information while a smaller study collects richer biomarkers or clinical covariates~\citep{Zhou06:Cox,Huang16:Efficient,Royston13:External}. In such settings, the registry can supply the baseline survival distribution while the smaller, potentially truncated cohort is used to estimate covariate effects.

Finally, we illustrate that our theoretical guarantees translate into practical gains on semi-synthetic experiments with four clinical datasets. Optimizing the truncated Cox likelihood consistently reduces prediction error compared to optimizing the standard likelihood as truncation levels increase. We also show that our results extend practically to neural network survival models, which can be trained efficiently with gradient-based methods under truncation. Compared with a commonly used EM approach, our first-order methods are substantially faster. On an enlarged dataset with hundreds of thousands of observations and thousands of covariates, they reduce training time by roughly $100\times$ while maintaining or improving predictive accuracy.

\section{Preliminaries}
\label{sec:prelims}

We begin by recalling the basic Cox proportional hazards model without truncation. Let $n$ be the number of independent samples. For each sample $i=1, \dots, n$, let $T_i^*$ denote the true event time, $C_i\in[0,\infty]$ the censoring time, and $\vx_i\in\mathbb R^d$ the covariate vector. We observe $(\vx_i,T_i,\delta_i)$, where $T_i=\min(T_i^*,C_i)$ and $\delta_i=\mathbbm{1}\{T_i^*\le C_i\}$. Let $\vtheta\in\mathbb{R}^d$ be the underlying regression parameter. All likelihoods are conditional on the observed covariates.

We assume a known baseline hazard, $h_0(t) > 0$. The hazard for sample $i$ is given by
$ h_i(t)=h_0(t)e^{\vx_i^{\top}\vtheta}$. 
We denote the cumulative baseline hazard by
$ H_0(t)=\int_0^t h_0(u)du$. 
We assume $h_0$ is locally integrable, $H_0(t)<\infty$ for every finite $t$, and $H_0(t)\to\infty$ as $t \to\infty$. The survival function for sample $i$ is given by 
$S_i(t)=\exp\Big(-\int_0^t h_i(u)du\Big)=\exp\big(-e^{\vx_i^{\top}\vtheta} H_0(t)\big)$. 
A standard assumption in survival analysis is that the censoring time is conditionally independent of the event time.
\begin{assumption}[Conditionally independent censoring]
\label{assum:censoring}
Conditional on $\vx_i$, the censoring time $C_i$ is independent of $T_i^*$, and its conditional distribution does not depend on $\vtheta$. 
\end{assumption}

We now consider a more complex setting involving both right censoring and \emph{interval truncation}.

\begin{definition}[Interval truncation with right censoring]
Let $T_i^*$ denote the true event time for sample $i$. Consider $K$ pairwise disjoint intervals $(t_{0,k},t_{1,k})$, where $0 \le t_{0,k} < t_{1,k} \le \infty$, and known probabilities $q_k \in [0,1]$, not all zero. Conditional on $T_i^*=t$, sample $i$ is included with probability
\[ \rho(t) = \sum_{k=1}^K q_k \mathbbm{1}\{t \in (t_{0,k},t_{1,k})\}. \]
The inclusion indicator is conditionally independent of $C_i$ given $(T_i^*,\vx_i)$.
\end{definition}

For a fixed individual $i$, we condition on $\vx_i$ throughout. Distributions and expectations for sampled individuals are additionally conditional on sample inclusion, and we suppress this conditioning in the notation. The hazard and survival functions, and the probabilities defining $\pi_i(t\mid\vtheta)$, refer to the population before conditioning on inclusion.

Define
$ \pi_i(t \mid \vtheta) = \sum_{k=1}^K q_k \pr(T_i^*>t, t_{0,k}<T_i^*<t_{1,k}) $
as the probability that sample $i$ is included and its event time exceeds $t$. We can equivalently write
$ \pi_i(t \mid \vtheta) = \sum_{k \mid t\le t_{0,k}} q_k\bigl[S_i(t_{0,k})-S_i(t_{1,k})\bigr]  +\sum_{k \mid t_{0,k} < t < t_{1,k}} q_k\bigl[S_i(t)-S_i(t_{1,k})\bigr]$. 
The first summation includes intervals after $t$, and the second includes the interval containing $t$.
The probability of inclusion is
\[ \pi_i(\vtheta) = \pi_i(0 \mid \vtheta) = \sum_{k=1}^K q_k \bigl[S_i(t_{0,k})-S_i(t_{1,k})\bigr]. \]

The likelihood under interval truncation, up to factors independent of
$\vtheta$, is 
\[ \fulllike(\vtheta) \propto \prod_{i=1}^n \frac{ [h_i(T_i)S_i(T_i)]^{\delta_i} [\pi_i(T_i \mid \vtheta)]^{1 -\delta_i}}{\pi_i(\vtheta)}. \] 

Let $\loglike(\vtheta)$ denote the log-likelihood contribution of observation $i$, and define its negative log-likelihood by $\losslike(\vtheta) := -\loglike(\vtheta)$. The population log-likelihood and population negative log-likelihood are respectively
\[ \poploglike(\vtheta) := \E \bigl[\loglike(\vtheta)\bigr],
\text{ and } 
\losspop(\vtheta) := \E \bigl[\losslike(\vtheta)\bigr] = -\poploglike(\vtheta). \]

We provide the derivation of both the standard Cox likelihood and the likelihood under interval truncation in~\Cref{sec:appendix-prelims}.

For the population results below, we assume that $\pr_{\vtheta^*}(\delta=1\mid\vx)>0$ for almost every covariate value $\vx$ in the sampled population. This condition will be needed to identify the true population parameter and rules out the case where every sample is censored for a particular covariate.

We also make the following two standard assumptions for our population results.

\begin{assumption}[Non-degeneracy]
\label{assum:covariate-nondegeneracy}
Under the sampled population, $\E[ \|\vx \|_2^2 ]<\infty$ and $\E[\vx\vx^\top]\succ 0$.
\end{assumption}

\begin{assumption}[Regularity]
\label{assum:regularity}
The population loss is continuously differentiable, its gradient may be computed by differentiating under expectation, $\nabla \losspop(\vtheta) = \E[\nabla \losslike(\vtheta)]$, and all expectations appearing in the analysis are finite.
\end{assumption}

\section{Main results}

In this section, we give the main result and proof overview.  Unless stated otherwise, all probabilities and expectations are taken with respect to the distribution with the true parameters and conditioned on $\vx_i$ when covariates are fixed. The hazard, survival, and inclusion probabilities are defined with respect to the population before conditioning on sample inclusion.

A natural route to efficient optimization would be to show that the truncated population log-likelihood remains concave. However, under interval truncation, this is false even in the simplest one-dimensional setting without censoring. 

\begin{example}[Loss of concavity under interval truncation]
\label{ex:concavity}
Consider a one-dimensional Cox model with covariate $x=1$, baseline hazard
$h_0(t)=1$, no censoring, and truncation to $(0,1)$. The sampled
log-likelihood of an event at time $T$ is
\[ \ell(\theta;T) = \theta-Te^\theta-\log\left(1-e^{-e^\theta}\right). \]
Let the true parameter be $\theta^*=\log 3$. Conditional on inclusion, $T$ is exponentially distributed with rate $3$, truncated to $(0,1)$, and therefore
$\E[T] = \frac{1}{3}-\frac{1}{e^3-1}$.
The population log-likelihood is
\[ \poploglike(\theta) = \E[\ell(\theta;T)] = \theta- \E[T] e^\theta-\log\left(1-e^{ -e^\theta}\right). \]
Its second derivative at $\theta=0$ is
\[ \poploglike''(0) = -\E[T] +\frac{1}{(e-1)^2} \approx -0.281+0.339 >0. \]
Therefore, the population log-likelihood is locally convex at $\theta=0$ and therefore is not globally concave.
\end{example}

\ifthenelse{\boolean{isSingleColumn}}{
    \renewcommand{\imgwidth}{0.4\linewidth}
}{
    \renewcommand{\imgwidth}{\linewidth}
}

\begin{figure}[h!]
    \centering
    \includegraphics[width=\imgwidth]{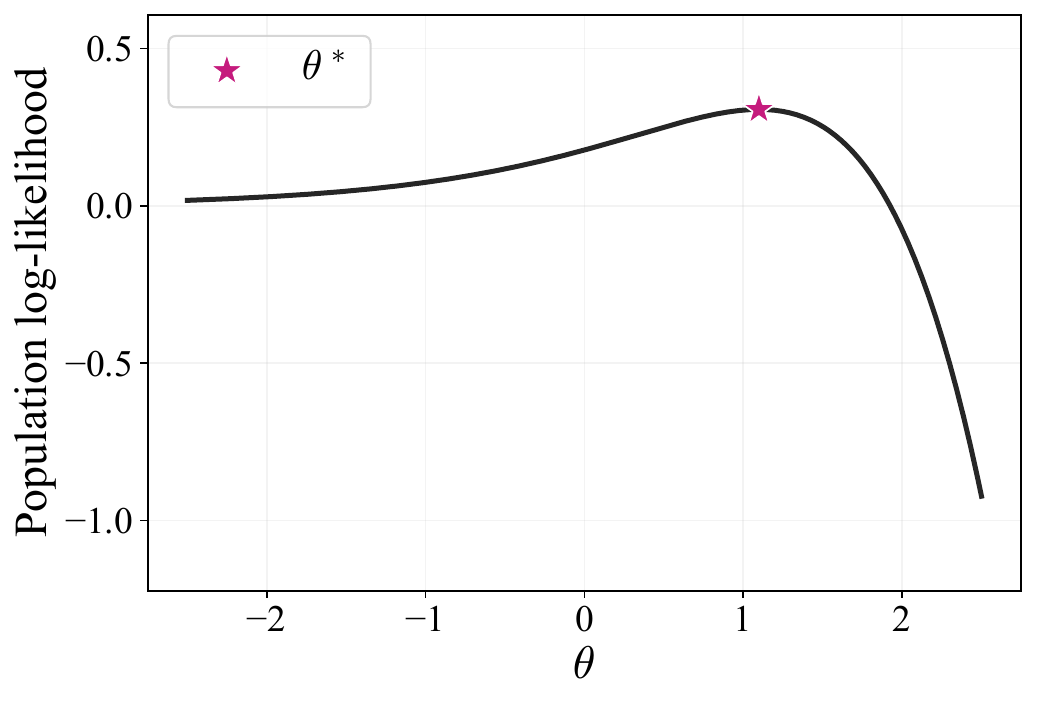}
    \caption{Population log-likelihood from~\Cref{ex:concavity}.}
    \label{fig:example-likelihood}
\end{figure}

From this example, it is clear to see that our result cannot follow from concavity alone, and we visually show the population likelihood from~\Cref{ex:concavity} in~\Cref{fig:example-likelihood}. Instead, we show that the likelihood has a nice property: at every incorrect parameter value, the gradient points in the direction of the true parameter value. This property is sufficient to rule out spurious stationary points and to guarantee convergence of projected SGD.

\paragraph{Proof overview} Our analysis proceeds in three main steps. First, we establish several properties of weighted truncated exponential distributions that will ultimately be applied to our truncated Cox model. \Cref{lem:truncated-exponential-mean} characterizes their mean, \Cref{lem:truncated-exponential-monotonicity} shows that increasing the exponential rate decreases the expectation of any nondecreasing function, and \Cref{lem:truncated-exponential-censoring} shows how right censoring can be represented by a nondecreasing transformation that preserves the mean. Second, we connect these results to the Cox model by showing that after transforming the event time by the cumulative baseline hazard $H_0$, the resulting distribution is a weighted truncated exponential distribution (\Cref{lem:cox-truncated-exponential}).~\Cref{lem:cox-likelihood-gradient} then expresses the gradient of the negative log-likelihood for a single observation in terms of the corresponding means. Combining these ingredients, \Cref{thm:cox-conditional-population} proves that, for fixed covariates, the population gradient always has the sign that causes gradient
descent to move toward the true linear predictor. Finally, \Cref{thm:cox-population-uniqueness} leverages this property to show that the true regression parameter is the unique stationary point of the multivariate population loss and~\Cref{cor:cox-sgd-finite-time} uses this property to prove that projected SGD converges to the true optimum. 

\subsection{Weighted truncated exponentials}
\label{sec:weighted-trunc-exp}

We first establish useful properties of \textit{weighted truncated exponential distributions}, which we will ultimately apply to the Cox model. These properties describe the mean of the sampled distribution, how expectations change with the exponential rate, and how to account for right censoring.

Fix $K$ pairwise disjoint intervals $[A_k,B_k]$, where $0\le A_k<B_k\le\infty$, and fixed probabilities $q_k\in[0,1]$, not all zero. The weighted truncated exponential distribution with rate $\lambda>0$ conditional on retention, denoted $\truncexp\left(\lambda,\{([A_k,B_k],q_k)\}_{k=1}^K\right)$, is obtained by retaining an exponential random variable with probability $q_k$ when it lies in the interval $(A_k,B_k)$, and with probability zero otherwise. When every $q_k=1$, this is simply an exponential distribution conditioned to lie in the union of the intervals. When $B_k=\infty$, the expressions $e^{-\lambda B_k}$ and $B_k e^{-\lambda B_k}$ are equal to zero.

The mean of an ordinary exponential distribution is $1/\lambda$. Our first lemma expresses the mean of the weighted truncated exponential distribution (after selection) as this quantity plus a correction determined by the intervals $[A_k,B_k]$ and probabilities $q_k$. 

\begin{restatable}{lemma}{TruncExpMean}
\label{lem:truncated-exponential-mean}
Let $U \sim \truncexp\left(\lambda,\{([A_k,B_k],q_k)\}_{k=1}^K\right)$ be a random variable following a weighted truncated exponential distribution with rate $\lambda>0$. Its expected value, denoted by $\mu(\lambda)$, satisfies
\[ \mu(\lambda) = \frac{\sum_{k=1}^K q_k \left(A_k e^{-\lambda A_k}-B_k e^{-\lambda B_k}\right)}{\sum_{k=1}^K q_k \left(e^{-\lambda A_k}-e^{-\lambda B_k}\right)} + \frac{1}{\lambda}. \]
\end{restatable}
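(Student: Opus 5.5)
The plan is to write down the density of $U$ explicitly and compute its first moment interval by interval. An exponential variable $E$ with rate $\lambda$ has density $\lambda e^{-\lambda u}$ on $(0,\infty)$. Retention happens with probability $\rho(u)=\sum_k q_k \mathbbm{1}\{u\in(A_k,B_k)\}$, so by Bayes' rule $U$ has density
\[ p(u) = \frac{\rho(u)\,\lambda e^{-\lambda u}}{Z}, \qquad Z = \sum_{k=1}^K q_k \int_{A_k}^{B_k} \lambda e^{-\lambda u}\,du = \sum_{k=1}^K q_k\left(e^{-\lambda A_k}-e^{-\lambda B_k}\right). \]
Here $Z>0$, since some $q_k>0$ and every interval is nondegenerate. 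Because the intervals are pairwise disjoint, the weight $\rho$ equals $q_k$ on $(A_k,B_k)$, and there is no double counting. Endpoints have measure zero, so it does not matter whether the intervals are taken open or closed.

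The mean is then
\[ \mu(\lambda) = \frac{1}{Z}\sum_{k=1}^K q_k \int_{A_k}^{B_k} u\,\lambda e^{-\lambda u}\,du. \]
I would evaluate each integral by integration by parts with antiderivative $-e^{-\lambda u}$:
\[ \int_{A}^{B} u\,\lambda e^{-\lambda u}\,du = \Bigl[-u e^{-\lambda u}\Bigr]_A^B + \int_A^B e^{-\lambda u}\,du = \left(A e^{-\lambda A} - B e^{-\lambda B}\right) + \frac{1}{\lambda}\left(e^{-\lambda A}-e^{-\lambda B}\right). \]
When $B=\infty$, the limits $Be^{-\lambda B}\to 0$ and $e^{-\lambda B}\to 0$ match the stated convention, so that case is covered with no separate argument.

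Summing over $k$ with weights $q_k$ and dividing by $Z$, the second group of terms gives exactly $\frac{1}{\lambda}\cdot Z/Z = 1/\lambda$. The first group gives the stated ratio, which yields the formula.

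There is no real obstacle. The only points needing care are the justification of the conditional density and the convergence of the improper integral when $B_k=\infty$, which holds because $u e^{-\lambda u}$ is integrable for $\lambda>0$.
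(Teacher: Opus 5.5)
Your proposal is correct and follows essentially the same route as the paper: write the conditional density with normalizer $\sum_k q_k\left(e^{-\lambda A_k}-e^{-\lambda B_k}\right)$, integrate $u\lambda e^{-\lambda u}$ by parts on each interval, and collect the $\frac{1}{\lambda}$ terms against the normalizer. Your remarks on the positivity of the normalizer and on the $B_k=\infty$ limit are small details that the paper leaves implicit.
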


The formula for the mean allows us to evaluate expectations, but understanding the direction of the likelihood gradient
requires comparing the expectation across different rates. One observation is that if we fix the intervals and weights, increasing $\lambda$ puts more weight on smaller values. The next lemma shows that this ordering persists despite the gaps between intervals and their different sampling probabilities: the expectation of any fixed nondecreasing function cannot increase with the rate. The comparison is strict whenever the function is not constant. This strictness will be important for ruling out stationary points.

\begin{restatable}{lemma}{TruncExpMonotonicity}
\label{lem:truncated-exponential-monotonicity}
Let $0<\lambda_1<\lambda_2$ and $U_j\sim \truncexp \left(\lambda_j,\{([A_k,B_k],q_k)\}_{k=1}^K \right)$ for $j = 1,2$. Let $\varphi:[0,\infty)\to\mathbb R$ be a nondecreasing function such that $\E[|\varphi(U_j)|]<\infty$ for $j=1,2$. Then
\[ \E[\varphi(U_1)]\ge\E[\varphi(U_2)]. \]
The inequality is strict whenever $\varphi(U_1)$ is not almost surely constant.
\end{restatable}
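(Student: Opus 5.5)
The plan is to use a monotone likelihood ratio argument. With $\rho(u)=\sum_k q_k\mathbbm{1}\{u\in(A_k,B_k)\}$, the density of $U_j$ is
\[ p_j(u)=\frac{\rho(u)\lambda_j e^{-\lambda_j u}}{Z_j}, \qquad Z_j=\int_0^\infty \rho(u)\lambda_j e^{-\lambda_j u}\,du>0 . \]
Since not all $q_k$ are zero and the intervals are nondegenerate, $Z_j>0$. On the common support $\{\rho>0\}$, the ratio
\[ \frac{p_1(u)}{p_2(u)}=c\,e^{(\lambda_2-\lambda_1)u}, \qquad c=\frac{\lambda_1 Z_2}{\lambda_2 Z_1}>0, \]
is strictly increasing in $u$. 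The weights $\rho$ cancel in this ratio, so the gaps between intervals and the unequal $q_k$ play no role. This is the key structural observation.

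Next, I would convert the likelihood ratio ordering into the expectation inequality using a symmetrization (Chebyshev-type) identity. Let $U,V$ be i.i.d.\ copies of $U_2$ and write $r=p_1/p_2$, so that $\E[r(U)]=1$. Then
\[ \E[\varphi(U_1)]-\E[\varphi(U_2)] = \E\bigl[\varphi(U)\,(r(U)-1)\bigr] = \mathrm{Cov}\bigl(\varphi(U),r(U)\bigr) = \tfrac12\,\E\Bigl[\bigl(\varphi(U)-\varphi(V)\bigr)\bigl(r(U)-r(V)\bigr)\Bigr]. \]
Both $\varphi$ and $r$ are nondecreasing, so the integrand is pointwise nonnegative, which gives $\ge 0$.

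Integrability needs a brief check. On the support, $r$ is bounded by $c\,e^{(\lambda_2-\lambda_1)B}$ only if the support is bounded, so I cannot rely on boundedness in general. Instead I would use $\varphi(u)r(u)p_2(u)=\varphi(u)p_1(u)$ together with $\E|\varphi(U_1)|<\infty$ and $\E|\varphi(U_2)|<\infty$. For the product form, it suffices to note that $\E[|\varphi(U)|\,r(V)]=\E|\varphi(U_2)|<\infty$ and $\E[|\varphi(U)|\,r(U)]=\E|\varphi(U_1)|<\infty$ by independence and $\E[r]=1$. Expanding the product, every term is then integrable, so the identity is justified.

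For strictness, I expect the main obstacle to be showing the integrand is positive on a set of positive measure. Since $r$ is strictly increasing on the support, the integrand is strictly positive whenever $\varphi(U)\ne\varphi(V)$, because monotonicity of $\varphi$ forces $U\ne V$ with the same ordering. So it suffices to show $\pr(\varphi(U)\neq\varphi(V))>0$. If $\varphi(U_1)$ is not a.s.\ constant, then neither is $\varphi(U_2)$, since $p_1$ and $p_2$ are mutually absolutely continuous with the same support. Then two i.i.d.\ copies of a non-degenerate random variable differ with positive probability, which gives the strict inequality.
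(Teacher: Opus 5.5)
Your proposal is correct and follows essentially the same route as the paper: it relates the two weighted truncated exponentials by an exponential-tilt likelihood ratio in which the weights $q_k$ cancel, then applies the symmetrized covariance identity with an independent copy. The only difference is the direction of the tilt. The paper reweights $U_1$ by the bounded decreasing factor $\omega(u)=e^{-(\lambda_2-\lambda_1)u}\le 1$, which makes integrability automatic and lets strictness be read off directly under $U_1$; your tilt of $U_2$ by the unbounded increasing ratio $r$ instead needs the integrability check and the mutual-absolute-continuity transfer of non-degeneracy, both of which you supply correctly.
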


We next account for right censoring. For a fixed censoring threshold $c$, a value $u\le c$ is observed exactly, whereas
a value $u>c$ is known only to exceed $c$. From this information, we cannot directly use the value of $U$ in every observation.

For a given rate $\lambda$, we represent this partial information by retaining each exact value and replacing
a censored value with the conditional tail mean $\mu_c(\lambda)=\E[U\mid U>c]$. Above the censoring threshold, the resulting function $m_{\lambda,c}$ depends only on $c$ and the rate, not on the unobserved value of $U$. The next lemma establishes two properties of this representation. First, it is nondecreasing in the underlying value, so we can eventually apply~\Cref{lem:truncated-exponential-monotonicity}. Second, its expectation equals $\mu(\lambda)$. If exact observations have positive probability, the function is also nonconstant, providing the strictness needed later.

\begin{restatable}{lemma}{TruncExpCensoring}
\label{lem:truncated-exponential-censoring}
Let $U\sim\truncexp\left(\lambda,\{([A_k,B_k],q_k)\}_{k=1}^K \right)$ with expected value $\mu(\lambda)$. For a fixed censoring threshold $c \ge 0$ with $\pr(U>c) > 0$, define
\[ \mu_c(\lambda):= \E[U \mid U>c]  \text{ and } m_{\lambda,c}(u) =
\begin{cases}
u, & u\le c,\\
\mu_c(\lambda), & u>c.
\end{cases} \]
Then $m_{\lambda,c}$ is nondecreasing and satisfies
\[ \E[m_{\lambda,c}(U)]=\mu(\lambda). \]
If $\pr(U\le c)>0$, then $m_{\lambda,c}$ is nonconstant on the set $\mathcal S=\bigcup_{k \mid q_k>0}(A_k,B_k)$.
\end{restatable}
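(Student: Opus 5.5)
The proof checks the three claims of \Cref{lem:truncated-exponential-censoring} in turn, working directly from the definition of $m_{\lambda,c}$.

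\textbf{Monotonicity.} On $[0,c]$ the function $m_{\lambda,c}$ is the identity, and on $(c,\infty)$ it is the constant $\mu_c(\lambda)$. So it suffices to compare values across the threshold. Since $\pr(U>c)>0$, the conditional expectation $\mu_c(\lambda)=\E[U\mid U>c]$ is well defined. It is finite because $U$ is dominated by an exponential with finite mean (the normalizing constant is positive). It also satisfies $\mu_c(\lambda)\ge c$, and in fact $\mu_c(\lambda)>c$, because $U>c$ almost surely on the conditioning event. Hence for any $u\le c<u'$ we get $m_{\lambda,c}(u)=u\le c\le \mu_c(\lambda)=m_{\lambda,c}(u')$. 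Together with monotonicity on each piece, this shows $m_{\lambda,c}$ is nondecreasing.

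\textbf{Mean preservation.} This is the tower property. Decompose on the event $\{U\le c\}$ and its complement:
\[
\E[m_{\lambda,c}(U)] = \E[U\mathbbm{1}\{U\le c\}] + \mu_c(\lambda)\,\pr(U>c) = \E[U\mathbbm{1}\{U\le c\}] + \E[U\mathbbm{1}\{U>c\}] = \E[U],
\]
and $\E[U]=\mu(\lambda)$ by \Cref{lem:truncated-exponential-mean}. The only care needed is integrability, which follows from $\E[U]<\infty$.

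\textbf{Nonconstancy.} The support of $U$ is $\mathcal S$ (up to endpoints), and $U$ has a positive density there: the exponential density times $q_k>0$, normalized. The hypothesis $\pr(U\le c)>0$ then forces $\mathcal S\cap(0,c]$ to have positive Lebesgue measure. That set therefore contains two distinct points $u_1<u_2$, with $m_{\lambda,c}(u_1)=u_1\ne u_2=m_{\lambda,c}(u_2)$, so $m_{\lambda,c}$ is nonconstant on $\mathcal S$.

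This argument works even if $\pr(U>c)$ were zero, but that case is excluded by hypothesis anyway. For the later application it is worth noting the stronger fact that $m_{\lambda,c}(U)$ is not almost surely constant, since $U$ itself is nondegenerate on $(0,c]\cap\mathcal S$. That is exactly what the strictness clause of \Cref{lem:truncated-exponential-monotonicity} requires.

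The main (mild) obstacle is this last step: one has to be careful that ``nonconstant on $\mathcal S$'' is meaningful in the measure sense. The fix is to use the absolute continuity of $U$, so that positive probability below $c$ implies infinitely many points in $\mathcal S\cap(0,c]$, rather than just a single atom.
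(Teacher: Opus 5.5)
Your proposal is correct and follows essentially the same route as the paper. The steps match one for one: monotonicity from $\mu_c(\lambda)>c$; the mean identity by splitting on $\{U\le c\}$ and using $\mu_c(\lambda)\pr(U>c)=\E[U\mathbbm{1}\{U>c\}]$; and nonconstancy by exhibiting two points of $\mathcal S$ at or below $c$. For that last step, the paper picks the two points inside a single interval $(A_k,B_k)$ with $q_k>0$ and $A_k<c$, whereas you use the positive Lebesgue measure of $\mathcal S\cap(0,c]$. Your extra remark that $m_{\lambda,c}(U)$ is not almost surely constant is a worthwhile strengthening, since that is the form the strictness clause of \Cref{lem:truncated-exponential-monotonicity} actually requires downstream.
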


If $\pr(U>c)=0$, every sampled value is uncensored. In this case, the same conclusions hold by defining $m_{\lambda,c}(u)=u$. This convention also covers the case where $c=\infty$.

The expectation identity extends directly to a random censoring threshold $C$ independent of $U$. Conditional on any fixed threshold, the distribution of $U$ is unchanged, so \Cref{lem:truncated-exponential-censoring} gives
\[ \E[m_{\lambda,C}(U)] = \E\left[\E[m_{\lambda,C}(U)\mid C]\right] = \mu(\lambda). \]
Averaging over independent censoring preserves the mean identity.

The lemmas in this section provide the comparison needed in the population likelihood analysis. We will construct $m_{\lambda,c}$ using a rate $\lambda$ and compare its expectation under the candidate and true distributions. Under the candidate distribution, its expectation is exactly $\mu(\lambda)$ and under the true distribution,~\Cref{lem:truncated-exponential-monotonicity} determines whether the expectation is larger or smaller. The function $m_{\lambda,c}$ is held fixed. Only the distribution used to take its expectation changes. We next connect this argument to the Cox model by transforming it into a weighted truncated exponential.

\subsection{Connecting to Cox regression}

We now connect the weighted truncated exponential distributions from the previous subsection to the Cox model. The key observation is that transforming an event time by the cumulative baseline hazard removes the baseline hazard from the density. Without truncation, this is the familiar fact that $H_0(T_i^*)$ is exponentially distributed with rate $e^{\vx_i^\top\vtheta}$. Under interval-weighted truncation, the same transformation maps each sampling interval $(t_{0,k},t_{1,k})$ to $(H_0(t_{0,k}),H_0(t_{1,k}))$ while preserving its inclusion probability $q_k$. The resulting distribution is a weighted truncated exponential.

\begin{restatable}{lemma}{CoxTruncExp}   
\label{lem:cox-truncated-exponential}
Fix the covariates $\vx_i$ and suppose $T_i^*$ follows the Cox model with parameter $\vtheta$ and known baseline hazard $h_0(t)$. Then, conditional on sample inclusion, $H_0(T_i^*)$ is distributed as
\[ \truncexp\left(\expshort, \left\{ \left([ H_0(t_{0,k}), H_0(t_{1,k})],q_k \right)\right\}_{k=1}^K \right). \]
\end{restatable}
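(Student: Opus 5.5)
The plan is to first identify the unconditional law of $U_i := H_0(T_i^*)$ and then show that the inclusion mechanism transforms into the weighted-interval retention rule defining $\truncexp$. Write $\lambda_i = e^{\vx_i^\top\vtheta}$. Since $h_0>0$, the map $H_0$ is continuous and strictly increasing on $[0,\infty)$, with $H_0(0)=0$ and $H_0(t)\to\infty$. Hence it is a bijection of $[0,\infty)$ onto itself, and it sends each open interval $(t_{0,k},t_{1,k})$ onto $(H_0(t_{0,k}),H_0(t_{1,k}))$, with the convention $H_0(\infty)=\infty$. Moreover, for $u\ge 0$,
\[ \pr(U_i>u)=\pr\bigl(T_i^*>H_0^{-1}(u)\bigr)=S_i\bigl(H_0^{-1}(u)\bigr)=\exp(-\lambda_i u). \]
So, before selection, $U_i$ is exponential with rate $\lambda_i$.

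Next, I will rewrite the inclusion probability in terms of $U_i$. Because $H_0$ is strictly increasing, $t\in(t_{0,k},t_{1,k})$ if and only if $H_0(t)\in(A_k,B_k)$, where $A_k:=H_0(t_{0,k})$ and $B_k:=H_0(t_{1,k})$. Therefore
\[ \rho(T_i^*)=\tilde\rho(U_i), \qquad \tilde\rho(u):=\sum_{k=1}^K q_k\mathbbm{1}\{u\in(A_k,B_k)\}. \]
Disjointness of the intervals $(A_k,B_k)$ is inherited from the disjointness of the original intervals, again by strict monotonicity. The inclusion indicator is Bernoulli with parameter $\rho(T_i^*)$ given $(T_i^*,\vx_i)$, and hence given $U_i$. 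Thus, for any Borel set $E$,
\[ \pr(U_i\in E\mid\text{included})=\frac{\E[\tilde\rho(U_i)\mathbbm{1}\{U_i\in E\}]}{\E[\tilde\rho(U_i)]}. \]
This is exactly the law obtained by drawing $U\sim\mathrm{Exp}(\lambda_i)$ and retaining it with probability $q_k$ when $U\in(A_k,B_k)$, and with probability zero otherwise. That is the definition of $\truncexp(\lambda_i,\{([A_k,B_k],q_k)\})$.

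It remains to check that the denominator is positive, so that the conditioning is well defined. We have $\E[\tilde\rho(U_i)]=\sum_k q_k(e^{-\lambda_i A_k}-e^{-\lambda_i B_k})=\pi_i(\vtheta)$. This is strictly positive because some $q_k>0$ and $A_k<B_k$ (strict monotonicity again). I expect the only delicate point to be bookkeeping rather than a real obstacle: endpoints and $t_{1,k}=\infty$ must be handled, but endpoints carry zero mass under the continuous exponential law, so open versus closed intervals do not matter. Conditioning on $\vx_i$ is harmless throughout.
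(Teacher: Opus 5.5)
Your argument is correct and rests on the same idea as the paper's proof: the strictly increasing map $H_0$ removes the baseline from the law of $T_i^*$ and sends each $(t_{0,k},t_{1,k})$ to $(H_0(t_{0,k}),H_0(t_{1,k}))$ with the same retention probability $q_k$. The paper carries this out as a change of variables $u=H_0(t)$, $du=h_0(t)\,dt$ in the conditional density $\rho(t)h_i(t)S_i(t)/\pi_i(\vtheta)$. You instead first identify the pre-selection law of $H_0(T_i^*)$ as $\mathrm{Exp}(e^{\vx_i^\top\vtheta})$ via the survival function, and then observe that selection commutes with the transformation; this avoids density manipulations and makes the positivity of the normalizer $\pi_i(\vtheta)$ explicit.
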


An important consequence of~\Cref{lem:cox-truncated-exponential} is that the transformed intervals and retention probabilities do not depend on $\vtheta$. This means that if we fix $\vx_i$, changing $\vtheta$ only changes the exponential rate $\expshort$. This allows us to compare the true and candidate event-time distributions using~\Cref{lem:truncated-exponential-monotonicity}. This is the main link between the results in~\Cref{sec:weighted-trunc-exp} and the population likelihood analysis to follow.

\subsection{Population likelihood landscape}

We next derive the gradient of the truncated Cox likelihood. The gradient takes a simple form after transforming event times by the cumulative baseline hazard. For an uncensored observation, the contribution depends on the transformed event time $H_0(T_i)$. For a censored observation, the exact event time is unknown, so the contribution instead depends on the conditional mean given that the event occurs after the censoring time. We define the mean transformed event time among included individuals by $\mu_i(\vtheta):=\E [H_0(T_i^*)\mid\vx_i]$, and the corresponding conditional tail mean by $\mu_i(t\mid\vtheta):=\E\left[H_0(T_i^*)\mid\vx_i,T_i^*>t\right]$.

\begin{restatable}[Single-observation likelihood gradient]{lemma}{CoxLikelihoodGrad}
\label{lem:cox-likelihood-gradient}
Let $\loglike(\vtheta)$ denote the log-likelihood contribution of observation $(T_i,\delta_i)$, and let $\losslike(\vtheta)=-\loglike(\vtheta)$. Then the gradient $\nabla_{\vtheta}\losslike(\vtheta)$ can be expressed as 
\[
e^{\vx_i^\top\vtheta} \left[\delta_iH_0(T_i) + (1-\delta_i)\mu_i(T_i\mid\vtheta) - \mu_i(\vtheta) \right]\vx_i. 
\]
\end{restatable}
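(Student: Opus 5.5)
The plan is to write the single-observation log-likelihood explicitly, reduce everything to the scalar linear predictor $\eta=\vx_i^\top\vtheta$ via the chain rule, and identify each derivative with a mean of $H_0(T_i^*)$ under the weighted truncated exponential distribution. Set $\lambda=e^{\eta}$, $a_k=H_0(t_{0,k})$, $b_k=H_0(t_{1,k})$. Then $S_i(t)=e^{-\lambda H_0(t)}$, and
\[
\loglike(\vtheta)=\delta_i\bigl[\log h_0(T_i)+\eta-\lambda H_0(T_i)\bigr]+(1-\delta_i)\log\pi_i(T_i\mid\vtheta)-\log\pi_i(\vtheta).
\]
Since $\vtheta$ enters only through $\eta$, we have $\nabla_{\vtheta}\losslike=-\partial_\eta\loglike\cdot\vx_i$. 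Moreover $\partial_\eta=\lambda\,\partial_\lambda$. So it suffices to show that
\[
\partial_\lambda\loglike=-\delta_iH_0(T_i)-(1-\delta_i)\mu_i(T_i\mid\vtheta)+\mu_i(\vtheta)+\frac{\delta_i}{\lambda}.
\]
At first sight the extra $\delta_i/\lambda$ term is inconsistent with the claimed formula, so it has to be accounted for.

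The key identity concerns $Z(\lambda)=\sum_k q_k(e^{-\lambda A_k}-e^{-\lambda B_k})$. Differentiating gives
\[
-\frac{Z'(\lambda)}{Z(\lambda)}=\frac{\sum_k q_k(A_ke^{-\lambda A_k}-B_ke^{-\lambda B_k})}{Z(\lambda)}=\mu(\lambda)-\frac{1}{\lambda}
\]
by \Cref{lem:truncated-exponential-mean}. By \Cref{lem:cox-truncated-exponential}, $\pi_i(\vtheta)=Z(\lambda)$ with intervals $[a_k,b_k]$ and $\mu(\lambda)=\mu_i(\vtheta)$. Hence $-\partial_\lambda\log\pi_i(\vtheta)=\mu_i(\vtheta)-1/\lambda$.

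For the censored term, $\pi_i(t\mid\vtheta)$ is a sum of the same form with intervals $[\max(a_k,H_0(t)),b_k]$, keeping only the intervals with $b_k>H_0(t)$. This is the normalizer of the weighted truncated exponential conditioned on $U>H_0(t)$, which is again a weighted truncated exponential with clipped intervals. Applying \Cref{lem:truncated-exponential-mean} to it gives $\partial_\lambda\log\pi_i(t\mid\vtheta)=-(\mu_i(t\mid\vtheta)-1/\lambda)$. The uncensored term contributes $\partial_\lambda[\log\lambda-\lambda H_0(T_i)]=1/\lambda-H_0(T_i)$.

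Summing these pieces:
\[
\partial_\lambda\loglike=\delta_i\Bigl(\frac1\lambda-H_0(T_i)\Bigr)-(1-\delta_i)\Bigl(\mu_i(T_i\mid\vtheta)-\frac1\lambda\Bigr)+\mu_i(\vtheta)-\frac1\lambda.
\]
The $1/\lambda$ terms total $\delta_i/\lambda+(1-\delta_i)/\lambda-1/\lambda=0$, so the apparent discrepancy disappears. Multiplying by $-\lambda\,\vx_i$ then gives exactly the stated gradient.

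The main obstacle is the censored case. One must verify that $\pi_i(t\mid\vtheta)$ has exactly the truncated-exponential normalizer form: the interval containing $t$ is clipped at $H_0(t)$, while intervals lying after $t$ are unchanged. One must also check that $\mu_i(t\mid\vtheta)$, defined as a conditional expectation in the pre-inclusion population restricted to inclusion, coincides with the mean of that clipped distribution. This is handled by conditioning the TruncExp law from \Cref{lem:cox-truncated-exponential} on $U>H_0(t)$, using monotonicity of $H_0$, which is strictly increasing because $h_0>0$. A further edge case is $B_k=\infty$, where the convention $B_ke^{-\lambda B_k}=0$ is needed; differentiation term by term is justified because each term is smooth in $\lambda>0$.
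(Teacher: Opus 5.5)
Your argument is correct and follows the paper's outline. Both proofs write out $\losslike$, differentiate the uncensored term, $\log\pi_i(T_i\mid\vtheta)$ and $\log\pi_i(\vtheta)$ separately, and let the constant pieces cancel. Your cancellation of the $1/\lambda$ terms is the same as the cancellation $-\delta_i-(1-\delta_i)+1=0$ of unit terms that occurs when the paper substitutes.

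The routes differ in how the log-derivatives of the normalizers are computed. The paper does not use \Cref{lem:truncated-exponential-mean}. It writes $\pi_i(t\mid\vtheta)=\int_t^\infty\rho(s)h_i(s;\vtheta)S_i(s;\vtheta)\,ds$ and differentiates under the integral using $\nabla_{\vtheta}[h_iS_i]=[1-\expshort H_0(s)]h_iS_i\vx_i$. It then reads off $\nabla_{\vtheta}\log\pi_i(t\mid\vtheta)=[1-\expshort\mu_i(t\mid\vtheta)]\vx_i$ from the definition of $\mu_i(t\mid\vtheta)$, with $\pi_i(\vtheta)=\pi_i(0\mid\vtheta)$ as the case $t=0$. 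That route is shorter, handles both normalizers in one computation, needs no interval bookkeeping, and works for any $\vtheta$-free inclusion weight $\rho$. It does rely on an interchange of differentiation and integration that is not spelled out, which is harmless here because of the exponential tails.

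Your route differentiates finite sums of exponentials in $\lambda$, so no interchange needs justifying. The cost is identifying $\pi_i(t\mid\vtheta)$ with the normalizer of the clipped family and its mean with $\mu_i(t\mid\vtheta)$. You do this correctly via strict monotonicity of $H_0$ and \Cref{lem:cox-truncated-exponential}. Applying \Cref{lem:truncated-exponential-mean} to the clipped family requires some retained interval with $q_k>0$, i.e.\ $\pi_i(T_i\mid\vtheta)>0$. This holds almost surely for censored observations and is the same proviso the paper states.

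There is one slip in the write-up: the display after ``it suffices to show'' is not the right target. If $\partial_\lambda\loglike$ really contained the extra $+\delta_i/\lambda$, multiplying by $-\lambda\vx_i$ would leave a spurious $-\delta_i\vx_i$ in the gradient. What you need, and what your final summation actually proves, is that identity without the $\delta_i/\lambda$ term. Remove it, together with the remark about an apparent discrepancy.
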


In both the observed and censored case, the mean $\mu_i(\vtheta)$ appears. This will allow us to use the monotonicity properties from~\Cref{lem:truncated-exponential-monotonicity} to determine the direction of the population gradient.

Finally, we leverage~\Cref{lem:cox-likelihood-gradient} to prove that, for a fixed covariate vector, the population negative log-likelihood has a single unique stationary point that is its minimizer.

\begin{restatable}[Unique stationary point of the population landscape in one dimension]{theorem}{CoxStatPoint}
\label{thm:cox-conditional-population}
Fix a covariate vector $\vx$ and let $(T, \delta)$ be the observed time and event indicator. Let $z^*=\vx^\top\vtheta^*$ be the true data-generating parameter. The one-dimensional population negative log-likelihood
\[ \losspop_{\vx}(z)=\E_{\vtheta^*}[-\ell(z;T,\delta) \mid \vx ], \]
has $z^*$ as its unique global minimizer and unique stationary point. Moreover,
\[ (z-z^*)\losspop_{\vx}'(z)>0 \text{ for every } z\ne z^*. \]
\end{restatable}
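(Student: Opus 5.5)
The plan is to express $\losspop_{\vx}'(z)$ through the single-observation gradient of \Cref{lem:cox-likelihood-gradient}, specialized to a fixed covariate and viewed as a function of the scalar linear predictor $z$. Write $\lambda=e^{z}$ and $\lambda^*=e^{z^*}$. By \Cref{lem:cox-truncated-exponential}, under parameter $z$ the variable $U=H_0(T^*)$ is distributed as $\truncexp(\lambda,\{([H_0(t_{0,k}),H_0(t_{1,k})],q_k)\}_k)$, with intervals and weights independent of $z$. The derivative in $z$ of the loss for one observation is then
\[
e^{z}\bigl[\delta H_0(T)+(1-\delta)\mu(T\mid z)-\mu(z)\bigr].
\]
Here $\mu(z)$ is the mean of $U$ under rate $\lambda$, and $\mu(t\mid z)=\E_\lambda[U\mid U>H_0(t)]$. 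Since $H_0$ is nondecreasing, the event $\{T^*>t\}$ coincides with $\{U>H_0(t)\}$ up to null sets, and I will use this identification.

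Next I fix the censoring time $C=c$. The bracket equals $m_{\lambda,H_0(c)}(U)-\mu(z)$, where $m$ is the function of \Cref{lem:truncated-exponential-censoring} built with the candidate rate $\lambda$. To see this, note that if $\delta=1$ then $U\le H_0(c)$ and the bracket is $U-\mu(z)$. If $\delta=0$, the bracket is $\mu_{H_0(c)}(\lambda)-\mu(z)$. Ties where $H_0$ is flat need a short check but do not change the argument. Using \Cref{assum:regularity} to differentiate under the expectation, and the conditional independence of censoring and inclusion, I get
\[
\losspop_{\vx}'(z)=e^{z}\,\E_C\Bigl[\E_{U\sim\lambda^*}\bigl[m_{\lambda,H_0(C)}(U)\bigr]-\mu(z)\Bigr].
\]
\Cref{assum:censoring} guarantees that the law of $C$ does not depend on the parameter.

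The comparison is the key step. Under the candidate rate $\lambda$, \Cref{lem:truncated-exponential-censoring} gives $\E_{\lambda}[m_{\lambda,H_0(c)}(U)]=\mu(z)$ for each $c$. Since $m_{\lambda,H_0(c)}$ is nondecreasing and held fixed, \Cref{lem:truncated-exponential-monotonicity} yields the following. If $z>z^*$, so that $\lambda>\lambda^*$, then $\E_{\lambda^*}[m]\ge\E_\lambda[m]=\mu(z)$, and the derivative is $\ge0$. If $z<z^*$, the inequality reverses and the derivative is $\le0$. Averaging over $C$ preserves these signs.

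For strictness I need a set of censoring values of positive probability on which $m$ is nonconstant under $\lambda^*$. The standing assumption $\pr_{\vtheta^*}(\delta=1\mid\vx)>0$ implies $\pr(U\le H_0(C))>0$ under the true law. Hence some set of $c$ of positive probability has $\pr_{\lambda^*}(U\le H_0(c))>0$. Since the supports under $\lambda$ and $\lambda^*$ share the same set $\mathcal S$, the same holds under $\lambda$. \Cref{lem:truncated-exponential-censoring} then makes $m$ nonconstant on $\mathcal S$, so $m(U)$ is not a.s.\ constant under $\lambda^*$. The strict part of \Cref{lem:truncated-exponential-monotonicity} gives a strict inequality on that set, and so $(z-z^*)\losspop_{\vx}'(z)>0$.

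The remaining claims follow directly. $\losspop_{\vx}$ is strictly decreasing on $(-\infty,z^*)$ and strictly increasing on $(z^*,\infty)$, so $z^*$ is the unique global minimizer and the only stationary point; at $z^*$ the derivative vanishes by the mean identity. I expect the main obstacle to be the strictness step: translating ``nonconstant on $\mathcal S$'' into ``not a.s.\ constant under $\lambda^*$'' while handling the degenerate cases. These include $c$ with $\pr(U>H_0(c))=0$, where the convention $m(u)=u$ applies, $c=\infty$, and flat stretches of $H_0$.
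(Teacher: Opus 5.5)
Your proposal is correct and follows essentially the same route as the paper's proof. Both rewrite $\losspop_{\vx}'(z)/\lambda$ as $\E_{\lambda^*}[m_{\lambda,H_0(C)}(U)]-\E_{\lambda}[m_{\lambda,H_0(C)}(U)]$ using the mean identity of \Cref{lem:truncated-exponential-censoring}, sign it with \Cref{lem:truncated-exponential-monotonicity}, and obtain strictness from $\pr_{\vtheta^*}(\delta=1\mid\vx)>0$; your explicit checks (independence of $C$ and $U$ after conditioning on inclusion, and the shared support $\mathcal S$ under both rates) fill in details the paper leaves implicit.
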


The next result simply lifts this one-dimensional result to the full distribution of training samples. 

\begin{restatable}[Unique stationary point of the population loss]{theorem}{CoxPopUnique}
\label{thm:cox-population-uniqueness}
Let $\vtheta^*$ be the true data-generating parameter. Then the population negative log-likelihood $\losspop(\vtheta)$ satisfies
\[ \left\langle\nabla_{\vtheta}\losspop(\vtheta),\vtheta-\vtheta^*\right\rangle>0 \text{ for every }\vtheta\ne\vtheta^*. \]
Moreover, $\nabla_{\vtheta}\losspop(\vtheta^*)=\mathbf0$. Consequently, $\vtheta^*$ is the unique stationary point of $\losspop$. 
\end{restatable}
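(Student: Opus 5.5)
The plan is to reduce the multivariate statement to the one-dimensional \Cref{thm:cox-conditional-population} by conditioning on covariates. Fix $\vtheta\neq\vtheta^*$. For each observation, $\losslike(\vtheta)$ depends on $\vtheta$ only through the linear predictor $z=\vx_i^\top\vtheta$. Write it as $g(z;T_i,\delta_i,\vx_i)$. By the chain rule, $\nabla_{\vtheta}\losslike(\vtheta)=g'(\vx_i^\top\vtheta)\,\vx_i$, which matches the form in \Cref{lem:cox-likelihood-gradient}. Using \Cref{assum:regularity} and the tower property, I will write
\[
\nabla\losspop(\vtheta)=\E\bigl[\E[g'(\vx^\top\vtheta)\mid\vx]\,\vx\bigr]=\E\bigl[\losspop_{\vx}'(\vx^\top\vtheta)\,\vx\bigr],
\]
where $\losspop_{\vx}$ is the conditional one-dimensional population loss of \Cref{thm:cox-conditional-population}. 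Differentiating under the conditional expectation is justified by the same regularity assumption.

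Taking the inner product with $\vtheta-\vtheta^*$ gives
\[
\langle\nabla\losspop(\vtheta),\vtheta-\vtheta^*\rangle=\E\bigl[\losspop_{\vx}'(z)\,(z-z^*)\bigr],
\]
with $z=\vx^\top\vtheta$ and $z^*=\vx^\top\vtheta^*$. By \Cref{thm:cox-conditional-population}, the integrand is $\ge 0$ pointwise. It is strictly positive whenever $z\ne z^*$, that is, whenever $\vx^\top(\vtheta-\vtheta^*)\neq0$. That theorem needs $\pr_{\vtheta^*}(\delta=1\mid\vx)>0$, which the paper assumes for almost every $\vx$.

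It then remains to show that the event $\{\vx^\top(\vtheta-\vtheta^*)\ne0\}$ has positive probability. \Cref{assum:covariate-nondegeneracy} supplies this. If instead $\vx^\top\vv=0$ almost surely for $\vv=\vtheta-\vtheta^*\neq\mathbf 0$, then $\vv^\top\E[\vx\vx^\top]\vv=\E[(\vx^\top\vv)^2]=0$, contradicting $\E[\vx\vx^\top]\succ0$. An expectation of a nonnegative random variable that is positive on a positive-probability event is strictly positive, so the inner product is $>0$.

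At $\vtheta^*$ we have $z=z^*$ for every $\vx$, and $\losspop_{\vx}'(z^*)=0$. This holds because $z^*$ is a stationary point by \Cref{thm:cox-conditional-population}; it also follows directly from \Cref{lem:cox-likelihood-gradient} and the mean identity of \Cref{lem:truncated-exponential-censoring}. Hence $\nabla\losspop(\vtheta^*)=\mathbf0$. Uniqueness follows at once: any other stationary point would make the inner product zero, contradicting strict positivity.

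I expect the main obstacle to be technical rather than conceptual: justifying the interchange that yields $\E[\losspop_{\vx}'(\vx^\top\vtheta)\vx]$, including integrability of the product with $\vx$. I would handle this by invoking \Cref{assum:regularity} (finiteness of all expectations) together with $\E\|\vx\|_2^2<\infty$.
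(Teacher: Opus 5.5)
Your proposal is correct and follows essentially the same route as the paper: condition on $\vx$, use the chain rule to write $\nabla_{\vtheta}\losspop(\vtheta)=\E[\losspop_{\vx}'(\vx^\top\vtheta)\vx]$, apply the one-dimensional sign condition of \Cref{thm:cox-conditional-population} to $\E[\losspop_{\vx}'(z)(z-z^*)]$, and use $\E[\vx\vx^\top]\succ0$ for strict positivity and the vanishing of $\losspop_{\vx}'(z^*)$ for stationarity at $\vtheta^*$. Your check that $\mathbf{v}^\top\E[\vx\vx^\top]\mathbf{v}=\E[(\vx^\top\mathbf{v})^2]>0$ forces $\pr(\vx^\top\mathbf{v}\neq0)>0$ just spells out a step the paper states without elaboration.
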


\subsection{Convergence of SGD}

\Cref{thm:cox-population-uniqueness} is sufficient to prove convergence of minibatch projected SGD (\Cref{alg:minibatch-sgd}). 

\begin{algorithm}[h]
\caption{Minibatch projected SGD for truncated Cox regression}
\label{alg:minibatch-sgd}
\begin{algorithmic}[1]
\Require Projection set $\Theta$, $\vtheta^{(0)}\in\Theta$,  batch size $b$, and step sizes $\eta_i$
\For{$i=0,\ldots,N-1$}
    \State Draw independent $\{(\vx_{i,j},T_{i,j},\delta_{i,j})\}_{j=1}^b$
    \For{$j=1,\ldots,b$}
        \State Compute $\lambda_{i,j}\gets e^{\vx_{i,j}^\top\vtheta^{(i)}}$ and $\mu_{i,j}(\vtheta^{(i)})$
        \If{$\delta_{i,j}=1$}
            \State $u_{i,j}\gets H_0(T_{i,j})$
        \Else
            \State $u_{i,j}\gets\mu_{i,j}(T_{i,j}\mid\vtheta^{(i)})$
        \EndIf
    \EndFor
    \State $\Delta \vtheta^{(i)} = \frac{1}{b}\sum_{j=1}^b\lambda_{i,j}\left[u_{i,j}-\mu_{i,j}(\vtheta^{(i)})\right]\vx_{i,j}$
    \State $\vtheta^{(i+1)}\gets\Pi_\Theta\left(\vtheta^{(i)}- \eta_i \Delta \vtheta^{(i)}\right)$
\EndFor
\State \Return $\vtheta^{(N)}$
\end{algorithmic}
\end{algorithm}

\begin{restatable}[Finite-time convergence of minibatch projected SGD]{corollary}{CoxSGDFiniteTime}
\label{cor:cox-sgd-finite-time}
Let $\Theta\subset\mathbb R^d$ be compact and convex with $\vtheta^*\in\Theta$, and let $\Pi_\Theta$ denote Euclidean projection onto $\Theta$. Suppose that $\nabla\losspop$ is differentiable at $\vtheta^*$, with $\nabla^2\losspop(\vtheta^*)=\E_{\vtheta^*}[\nabla^2\losslike(\vtheta^*)]$ finite. Assume also that, for some finite constant $B$,
$\E \left[\|\nabla_{\vtheta}\losslike(\vtheta)\|_2^2\right]\le B^2$ for all $\vtheta \in \Theta$. There exist constants $0< \kappa \le L<\infty$, depending only on the population model and $\Theta$, for which the following guarantee holds. Let $\vtheta^{(t)}$ denote the iterates of \Cref{alg:minibatch-sgd} with minibatch size $b$, a fixed initialization $\vtheta^{(0)}\in\Theta$, and step sizes $\eta_i=2/[\kappa(i+t_0)]$, where $t_0\ge\max\{3,2L^2/\kappa^2\}$. Then, for every $t=1,\ldots,N$,
\[ \E\left[\|\vtheta^{(t)}-\vtheta^*\|_2^2\right]\le\frac{t_0^2\|\vtheta^{(0)}-\vtheta^*\|_2^2}{t^2}+\frac{4B^2}{\kappa^2bt}. \]
\end{restatable}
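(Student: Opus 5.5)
The plan follows the standard projected SGD analysis for strongly monotone problems. The update $\Delta\vtheta^{(i)}$ is an unbiased estimator of $\nabla\losspop(\vtheta^{(i)})$, by \Cref{lem:cox-likelihood-gradient} and \Cref{assum:regularity}. The one extra ingredient is a quantitative version of \Cref{thm:cox-population-uniqueness}: a restricted strong monotonicity bound
\[ \langle\nabla\losspop(\vtheta),\vtheta-\vtheta^*\rangle\ge\kappa\|\vtheta-\vtheta^*\|_2^2 \quad\text{for all }\vtheta\in\Theta, \]
together with a Lipschitz-type bound $\|\nabla\losspop(\vtheta)\|_2\le L\|\vtheta-\vtheta^*\|_2$ on $\Theta$. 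Once these two constants are in hand, the rate follows from the usual one-step recursion.

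\textbf{Step 1: constructing $\kappa$.} Define $g(\vtheta)=\langle\nabla\losspop(\vtheta),\vtheta-\vtheta^*\rangle/\|\vtheta-\vtheta^*\|_2^2$ on $\Theta\setminus\{\vtheta^*\}$. By \Cref{thm:cox-population-uniqueness}, $g>0$ there, and $g$ is continuous there since $\losspop$ is $C^1$.

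Away from $\vtheta^*$, on $\Theta\cap\{\|\vtheta-\vtheta^*\|\ge r\}$, the set is compact, so $g$ attains a positive minimum.

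Near $\vtheta^*$, use differentiability of $\nabla\losspop$ at $\vtheta^*$ together with $\nabla\losspop(\vtheta^*)=\mathbf0$. This gives $\nabla\losspop(\vtheta)=\mathcal H(\vtheta-\vtheta^*)+o(\|\vtheta-\vtheta^*\|)$ with $\mathcal H=\nabla^2\losspop(\vtheta^*)$. It therefore suffices to show $\mathcal H\succ0$. I would compute this Hessian as the Fisher information. The score has mean zero, so $\mathcal H=\E[\operatorname{Var}(\cdot)\,\vx\vx^\top]$-type expression; concretely, $\mathcal H=\E[\psi(\vx)\vx\vx^\top]$, where $\psi(\vx)=\partial_z^2\losspop_{\vx}(z^*)$.

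To get $\psi(\vx)>0$, differentiate the one-dimensional gradient from \Cref{thm:cox-conditional-population}. Only the expectation of the fixed function $m$ changes with the rate, and \Cref{lem:truncated-exponential-monotonicity} gives strict decrease. I expect strict monotonicity to upgrade to a strictly positive derivative, $-\mu'(\lambda)>0$, which equals the variance of the censored representation. This is positive because $\pr(\delta=1\mid\vx)>0$ makes $m$ nonconstant (\Cref{lem:truncated-exponential-censoring}). Combined with $\E[\vx\vx^\top]\succ0$ (\Cref{assum:covariate-nondegeneracy}), this gives $\mathcal H\succ0$. Hence $g\ge\lambda_{\min}(\mathcal H)/2$ for small $r$.

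Taking the minimum over the two regions yields $\kappa$. The same local expansion plus compactness gives $L$, taken as the sup of $\|\nabla\losspop(\vtheta)\|/\|\vtheta-\vtheta^*\|$. I expect establishing $\mathcal H\succ0$ (strict positivity of $\psi$) to be the main obstacle. The fallback is to note that a degenerate Hessian would contradict strictness in \Cref{thm:cox-conditional-population} only if one can rule out a derivative vanishing at a point, so the variance computation really is needed.

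\textbf{Step 2: one-step recursion.} Let $e_i=\E\|\vtheta^{(i)}-\vtheta^*\|^2$. The projection is nonexpansive and $\vtheta^*\in\Theta$, and the minibatch has variance reduced by independence. Together these give
\[ e_{i+1}\le e_i-2\eta_i\kappa e_i+\eta_i^2\E\|\Delta\vtheta^{(i)}\|^2. \]
For the last term, split $\Delta=\nabla\losspop+\xi$ with $\E\|\xi\|^2\le B^2/b$ and $\|\nabla\losspop\|^2\le L^2 e_i$. This gives
\[ e_{i+1}\le(1-2\eta_i\kappa+\eta_i^2L^2)e_i+\eta_i^2B^2/b. \]
With $\eta_i=2/[\kappa(i+t_0)]$ and $t_0\ge 2L^2/\kappa^2$, we have $\eta_i^2L^2\le\eta_i\kappa\cdot(2L^2/(\kappa^2 t_0))\le\eta_i\kappa$. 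So the contraction factor is at most $1-\eta_i\kappa=1-2/(i+t_0)$.

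\textbf{Step 3: unrolling.} Multiply the recursion by $(i+t_0)^2$, or equivalently induct on the claimed bound. The product $\prod_i(1-2/(i+t_0))\le t_0^2/(t+t_0-1)^2$-type estimates give the $t_0^2 e_0/t^2$ term. The noise sum $\sum_i\frac{4B^2}{\kappa^2 b(i+t_0)^2}\cdot\frac{(i+t_0)^2}{(t+t_0)^2}$-type terms give at most $4B^2/(\kappa^2 bt)$. The condition $t_0\ge3$ is used to absorb the off-by-one factors in the induction.
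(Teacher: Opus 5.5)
Your proposal is correct and follows essentially the same route as the paper. The shared steps are positive definiteness of $\nabla^2\losspop(\vtheta^*)$ via the variance of the censored representation (the paper's $\operatorname{Var}(\E[H_0(T^*)\mid\vx,T,\delta]\mid\vx)$, which equals $\operatorname{Var}(m_{\lambda^*,H_0(C)}(U))$), extension of the local monotonicity and Lipschitz bounds to all of $\Theta$ by compactness and \Cref{thm:cox-population-uniqueness}, and the recursion $e_{i+1}\le(1-2/(i+t_0))e_i+4B^2/(\kappa^2b(i+t_0)^2)$ unrolled by telescoping (the paper multiplies by $(i+t_0)(i+t_0-1)$ rather than $(i+t_0)^2$, which is immaterial). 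One notational point to tidy: the derivative you need is that of $\nu\mapsto\E_\nu[m_{\lambda^*,H_0(C)}(U)]$ at $\nu=\lambda^*$, whose negative is $\operatorname{Cov}(m,U)=\operatorname{Var}(m)$ (freezing $m$ is justified because its own $\lambda$-dependence contributes equal terms to both expectations at $\lambda=\lambda^*$), not $-\mu'(\lambda)=\operatorname{Var}_\lambda(U)$, which ignores censoring.
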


\section{Impossibility results for unknown baselines}

In this section, we show that the population optimization guarantees established previously do not always hold when the baseline is unknown. Beyond optimization guarantees, an unknown baseline can introduce an even more drastic difficulty: non-identifiability~\citep{Finkelstein93:Proportional}.

\begin{example}[Non-identifiability of the baseline under truncation]
\label{ex:baseline-nonidentifiability}
Consider a truncated Cox model with $\theta=0$, no censoring, and truncation to the interval $(0,1)$. For any $a\in(0,1)$, define the baseline hazard on $[0,1]$ by $h_{0,a}(t):=\frac{a}{1-at}$. The survival function and event-time density on $[0,1]$ are $S_a(t)=1-at$ and $p_a(t)=h_{0,a}(t)S_a(t)=a$. We obtain $\Pr(0<T^*<1)=a$, and the sampled density is $p(t\mid 0<T^*<1)=\frac{p_a(t)}{\Pr(0<T^*<1)}=1$ for every $0<t<1$. Therefore, every $a\in(0,1)$ produces exactly the same sampled distribution, namely the uniform distribution on $(0,1)$, even though the baseline hazards $h_{0,a}$ are different. Each $h_{0,a}$ can be extended smoothly and strictly positively beyond $t=1$ so that its cumulative hazard diverges at infinity, without changing the sampled distribution. We conclude that the baseline hazard, and the full model, are not identifiable from the truncated data.
\end{example}

However, even when the model is identifiable, we show that other issues arise from an unknown baseline.  When the baseline hazard is known, changing the regression coefficients has a monotone effect on the event-time distribution. However, when the baseline is also estimated, an incorrect regression coefficient can be offset by a change in the baseline, and this monotone structure can disappear. We give two counterexamples illustrating different consequences of this interaction. Both have no censoring and an identifiable regression coefficient. First, we show that even when the baseline is restricted to a smooth one-parameter family, the joint population loss can have a suboptimal local minimum. 

\begin{restatable}[A suboptimal local minimum]
{theorem}{CoxUnknownBaselineLocalMin}
\label{thm:finite-spurious-cox}
There exists a correctly specified truncated Cox model with a smooth, strictly positive baseline family and an identifiable joint parameter
$\vbeta=(\theta,\gamma)$ where $\gamma$ is the parameter of the baseline family such that, under truncation to a single interval and without censoring, $\losspop$ has the true parameter $\vbeta^*$ as its unique global minimizer, but also has a point $\vbeta'$ in the interior of the parameter space with $\theta'\ne\theta^*$ satisfying
\[ \nabla \losspop(\vbeta')=\mathbf{0} \text{ and } \nabla^2 \losspop(\vbeta')\succ 0, \text{ but } \losspop(\vbeta')>\losspop(\vbeta^*). \]
\end{restatable}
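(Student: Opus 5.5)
We will prove the statement by exhibiting an explicit construction and then checking the required properties, partly in closed form and partly by certified numerics. The key simplification comes from the population loss of a correctly specified model under no censoring. Up to an additive constant, it is a KL divergence:
\[
\losspop(\vbeta)-\losspop(\vbeta^*)=\E_{\vx}\bigl[\mathrm{KL}\bigl(p_{\vbeta^*}(\cdot\mid\vx,\text{incl})\,\|\,p_{\vbeta}(\cdot\mid\vx,\text{incl})\bigr)\bigr].
\]
Hence $\vbeta^*$ is a global minimizer automatically. It is the unique minimizer as soon as the joint parameter is identifiable, i.e.\ $\vbeta\mapsto\{p_{\vbeta}(\cdot\mid\vx,\text{incl})\}_{\vx}$ is injective. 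The whole task therefore reduces to designing a model that is identifiable but whose KL landscape has a second, strictly worse basin.

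For the construction, we use two covariate values, $x\in\{0,1\}$, each with probability $1/2; this satisfies \Cref{assum:covariate-nondegeneracy} once an intercept is absorbed into the baseline. Truncation is to a single interval $(0,1)$ with $q=1$. We take a one-parameter smooth baseline family whose shape on $(0,1)$ changes non-monotonically with $\gamma$. A concrete first try is
\[
h_{0,\gamma}(t)=1+\gamma^2 t+ c\,\gamma^4 t^2
\]
or a similar polynomial, extended beyond $t=1$ so that the cumulative hazard diverges.

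The intended mechanism follows the discussion after \Cref{ex:baseline-nonidentifiability}: an incorrect $\theta'$ is partially compensated by a baseline shape $\gamma'$. The $x=0$ group pins down $h_{0,\gamma}$ only up to the ambiguity that truncation creates. The $x=1$ group sees the hazard $e^{\theta}h_{0,\gamma}$. Because truncation discards the level information in the manner of \Cref{ex:baseline-nonidentifiability}, a pair $(\theta',\gamma')$ can nearly reproduce both truncated conditionals. It does so only imperfectly, leaving a strictly positive KL value. Identifiability will be checked directly. The truncated density of the $x=0$ group determines the normalized function
\[
t\mapsto h_{0,\gamma}(t)e^{-H_{0,\gamma}(t)}\big/\bigl(1-e^{-H_{0,\gamma}(1)}\bigr)
\]
on $(0,1)$. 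From this we show $\gamma$ is recovered, possibly after restricting the parameter space to $\gamma>0$ to remove sign symmetry. Given $\gamma$, the $x=1$ group determines $\theta$ by the monotonicity argument of \Cref{thm:cox-conditional-population} with known baseline.

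The required properties are then verified in order. First, we write $\losspop$ explicitly as a finite sum of one-dimensional integrals over $(0,1)$, which are smooth in $\vbeta$. Second, we locate a critical point $\vbeta'$ numerically and certify it rigorously. For this we use interval arithmetic combined with a Newton–Kantorovich (or Krawczyk) argument: this shows that a zero of $\nabla\losspop$ exists in a small box around the numerical point. The same argument shows the Hessian is positive definite on that box, i.e.\ the trace and determinant bounds of the $2\times2$ Hessian are positive there. We also check $\theta'\ne\theta^*$, that the box lies in the interior of the parameter space, and that $\losspop(\vbeta')-\losspop(\vbeta^*)>0$. The last inequality follows from identifiability, since $\vbeta'\ne\vbeta^*$, but a numerical lower bound also confirms it. If the construction is simpler, e.g.\ the baseline is piecewise constant on $(0,1)$ and smoothed away from the truncation window, the integrals become closed form and the certification is elementary.

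The main obstacle is the first step: actually finding a family in which a spurious basin exists while identifiability still holds. The naive smooth families I would try first may simply yield a KL surface that is quasi-convex. My plan is therefore to search first with a two-bump baseline family, $h_{0,\gamma}(t)=1+a\,\phi(t-\gamma)$, where $\phi$ is a smooth bump and $a$ is large. In this family, moving the bump toward either end of the window produces two competing fits. These naturally create separate basins, one for each side of the $\theta$ compensation. Only after such a landscape is found numerically will I tune constants to make the certification margins comfortable.
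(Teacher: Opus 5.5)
\textbf{Verdict: there is a genuine gap.} The theorem is an existence statement, and your proposal never produces the model whose existence it claims.

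Your reduction is correct, and the paper uses it at the end. With correct specification and no censoring, $\losspop(\vbeta)-\losspop(\vbeta^*)$ is a covariate-average of KL divergences. Identifiability therefore gives both the unique global minimizer and $\losspop(\vbeta')>\losspop(\vbeta^*)$.

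Everything else is left open:
\begin{itemize}
\item You list candidate families, concede they may give a quasi-convex landscape, and defer to a numerical search.
\item The interval-arithmetic certification only applies once you have a candidate critical point, and you do not have one.
\item The heuristic that $(\theta',\gamma')$ \emph{nearly reproduces} both conditionals does not imply a critical point exists, let alone one with positive-definite Hessian.
\end{itemize}

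Your specific candidates also have structural problems. The polynomial family depends only on $\gamma^2$, so identifiability forces you to restrict to $\gamma>0$ (or $\gamma\ge 0$). That removes from the interior exactly the point $\gamma=0$ where the symmetry would have given $\partial_\gamma\losspop=0$ for free. For the translated bump $1+a\phi(t-\gamma)$ with compactly supported $\phi$, every $\gamma$ whose bump misses $(0,1)$ gives baseline $1$ on the window. There identifiability fails and the Hessian is singular, unless you restrict $\gamma$ further. As written, this is a search plan rather than a proof.

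\textbf{The missing idea} is to fix the spurious point first and engineer the family around it. A one-parameter baseline family is an arbitrary smooth curve in function space. So you can prescribe its value and its first and second $\gamma$-derivatives at $\gamma=0$ independently of where it passes at the truth. The paper does this in four steps.
\begin{itemize}
\item \emph{Setup.} Truncate to $(0,1/2)$ and take $g_0\equiv 1$, $g_1=1/(1-t)$ on the window, with $\vbeta^*=(\log 3,1)$.
\item \emph{Regression derivative.} Set $\vbeta'=(\log\lambda',0)$, where $\lambda'$ makes the truncated rate-$\lambda'$ exponential match the true mean of group $x=1$. This alone gives $\partial_\theta\losspop(\vbeta')=0$.
\item \emph{Baseline derivative.} Choose the first-order shape $\partial_\gamma\log g_\gamma|_{\gamma=0}=t^2-at$ with $a=A_2/A_1$. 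This gives $\partial_\gamma\losspop(\vbeta')=0$.
\item \emph{Curvature.} Add the term $\frac{M}{2}(\gamma^3-\gamma^2)t$ to $\log g_\gamma$. It leaves the gradient and the $\partial_{\theta\theta}$, $\partial_{\theta\gamma}$ Hessian entries unchanged, while adding $-MA_1$ to $\partial_{\gamma\gamma}\losspop(\vbeta')$. An elementary bound shows $A_1<0$, so a large $M$ makes the Hessian positive definite via the Schur complement.
\end{itemize}

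Identifiability is then checked in closed form. The identity $\frac{d}{dt}\log(p_1/p_0)=-(e^\theta-1)g_\gamma(t)$ recovers $\theta$ by letting $t\downarrow 0$, since $g_\gamma(0)=1$. Because $\theta>0$, it then recovers the baseline itself. The $-\gamma^3\log(1-t)$ term gives $\frac{d^3}{dt^3}\log g_\gamma=2\gamma^3/(1-t)^3$, which recovers $\gamma$. No numerical search or interval arithmetic is needed; only elementary bounds such as $e>8/3$ are used.
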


In the second example,  we allow the baseline to be the best-fitting function from a broad class for each candidate regression coefficient. We show that the resulting profiled population loss can be constant and strictly suboptimal over an entire interval of incorrect regression coefficients. Neither joint estimation nor profiling over an unknown baseline preserves the favorable population geometry from the known baseline setting. 

\begin{restatable}[A suboptimal flat region after optimizing the baseline]{theorem}{CoxFlatProfile}
\label{thm:flat-unknown-baseline-profile}
Let $\mathcal H$ be the class of smooth, strictly positive baseline hazards whose cumulative hazards diverge at infinity. There exists a correctly specified truncated Cox model whose true regression coefficient $\theta^*>0$ is identifiable, but whose profiled population negative log-likelihood
\[ \widehat{\losspop}(\theta):=\inf_{h_0\in\mathcal H} \losspop(\theta,h_0) \]
is constant on $(-\infty,0]$ and satisfies $\widehat{\losspop}(\theta)>\widehat{\losspop}(\theta^*)$ for every $\theta\le0$. Consequently, every $\theta<0$ is a suboptimal stationary point of $\widehat{\losspop}$ and for every $\theta<0$,  the infimum defining $\widehat{\losspop}(\theta)$ is not attained by any baseline in $\mathcal H$.
\end{restatable}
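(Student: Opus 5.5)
The plan is to build an explicit two-group example and compute the profiled loss exactly. Take a binary covariate $x\in\{0,1\}$ with $\pr(x=1)=w_1\in(0,1)$, no censoring, truncation to the single interval $(0,1)$ with $q=1$, and true coefficient $\theta^*>0$ with a fixed true baseline $h_0^*$. On $[0,1]$ write $G=S_0$ for the baseline survival function, $s=G(1)\in(0,1)$ and $\lambda=e^\theta$. The sampled densities are then
\[ p_0(t)=\frac{-G'(t)}{1-s},\qquad p_1(t)=\frac{-\lambda G(t)^{\lambda-1}G'(t)}{1-s^\lambda}. \]
The key structural fact is that the ratio $p_1/p_0=\lambda(1-s)G^{\lambda-1}/(1-s^\lambda)$ is a monotone function of $t$, because $G$ is strictly decreasing. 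The ratio is nondecreasing in $t$ when $\lambda\le1$ (i.e.\ $\theta\le0$) and strictly decreasing when $\lambda>1$. So the truth, with $\theta^*>0$, has a strictly decreasing ratio $f_1/f_0$, whereas every candidate with $\theta\le 0$ has a nondecreasing one.

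\emph{Lower bound.} View the loss as a cross-entropy over the joint law of $(x,T)$. The model joint is $q(x,t)=w_x p_x(t)$; the covariate weights are fixed and do not enter the gradient in $\theta$. Split the loss into two terms:
\[ \losspop=\E_m[-\log q_T(T)]+\E_m\bigl[-\pi(T)\log\rho(T)-(1-\pi(T))\log(1-\rho(T))\bigr]. \]
Here $m=w_0f_0+w_1f_1$ is the true marginal of $T$, $\pi(t)=\pr(x=1\mid T=t)$ is the true posterior, $q_T$ is the model marginal and $\rho(t)=q(x=1\mid t)$ is the model posterior.
\begin{itemize}
\item By Gibbs' inequality the first term is at least the entropy $H(m)$.
\item When $\theta\le0$, $\rho$ is nondecreasing in $t$, while $\pi$ is strictly nonincreasing. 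The minimizer of the second term over nondecreasing $\rho$ is the isotonic regression of $\pi$ under $m$. Since $\pi$ is antitone, pool-adjacent-violators collapses it to the constant $\E_m[\pi]=w_1$.
\item So the second term is at least the Bernoulli entropy $H(w_1)$.
\end{itemize}
Hence $\widehat{\losspop}(\theta)\ge H(m)+H(w_1)$ for every $\theta\le0$, and this value does not depend on $\theta$.

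\emph{Upper bound and non-attainment.} I would approach this bound by letting $s\to1$. Choose $G=1-(1-s)M$, where $M$ is the CDF of $m$ on $(0,1)$, and extend $h_0$ smoothly and positively beyond $1$ so that $H_0\to\infty$. Then $p_0=m$ exactly. Linearizing $1-G^\lambda$ for $G$ near $1$ shows $p_1\to m$ uniformly, so the loss tends to $H(m)+H(w_1)$. This gives the matching upper bound, so $\widehat{\losspop}$ is constant on $(-\infty,0]$; at $\theta=0$ the value is even attained.

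For $\theta<0$, attaining the bound would force equality in the isotonic step, which requires $\rho$ to be constant. That means $G^{\lambda-1}$ is constant on $(0,1)$, which is impossible since $\lambda\neq1$ and $G$ is strictly decreasing. So the infimum is not attained. Because the function is constant on an open interval, every $\theta<0$ is a stationary point of $\widehat{\losspop}$.

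\emph{Strict suboptimality and identifiability.} At $\theta^*$ the true baseline attains the minimal cross-entropy, namely the true joint entropy $H(m)+\E_m[H(\pi(T))]$. This is strictly below $H(m)+H(w_1)$ because $\pi$ is nonconstant (as $f_0\ne f_1$), and conditioning strictly reduces entropy. For identifiability, I would argue as follows:
\begin{itemize}
\item The sampled pair $(p_0,p_1)$ determines $G$ up to the scalar $s$, via $1-G=(1-s)P_0$ with $P_0$ the CDF of $p_0$.
\item The log-ratio $\log(p_1/p_0)=\text{const}+(\lambda-1)\log G$ must then be affine in $\log(1-(1-s)P_0)$. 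For a true $s^*$ strictly inside $(0,1)$, this pins down both $s$ and $\lambda$, using the strict convexity of $\log$.
\end{itemize}

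I expect the main obstacle to be the lower bound. Specifically, I need to justify cleanly that the profile over the infinite-dimensional class $\mathcal H$ reduces to a monotone-ratio constraint on the posterior $\rho$. I also need to justify that the isotonic projection of an antitone function is its mean, measure-theoretically, when $m$ has a density on $(0,1)$. I would handle this directly, without citing isotonic regression theory: for any nondecreasing $\rho$, the covariance inequality (Chebyshev's sum inequality) gives $\E_m[\pi\log\rho]\le w_1\E_m[\log\rho]$ and $\E_m[(1-\pi)\log(1-\rho)]\le(1-w_1)\E_m[\log(1-\rho)]$, since $\pi$ is nonincreasing and $\log\rho$ is nondecreasing. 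Then Jensen applied to the binary cross-entropy finishes the bound.
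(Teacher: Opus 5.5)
Your proof is correct. It differs from the paper's mainly in how the lower bound is obtained.

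\textbf{Lower bound.} The paper stays with the conditional loss. It applies $-\log v\ge 1-v$ to $v=p_x/p_{\mathrm{pool}}$, which reduces $\losspop(\theta,h_0)-\losspop_{\mathrm{pool}}$ to $-\operatorname{Cov}(\eta,p_1/p_0)$ with the covariance taken under the \emph{candidate} density $p_0$. Here $\eta$ is the true posterior, and the argument finishes with one opposite-monotonicity step. You instead pass to the joint cross-entropy, which differs from $\losspop$ only by the constant $H(w_1)$. You split it by the chain rule into a marginal term, handled by Gibbs, and a posterior term, handled by Chebyshev's inequality under the \emph{true} marginal $m$ followed by Jensen.

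\textbf{What each route buys.} The paper's route is shorter: one linearization and one covariance, with no Jensen step. Yours is more transparent. It shows that the flat value is exactly the loss of a model that ignores the covariate. It also makes the gap at $\theta^*$ appear directly as the mutual information $I(x;T)=\sum_x w_x D_{\mathrm{KL}}(p_x^*\,\|\,m)$, which is the same quantity the paper computes as $\tfrac12\sum_x D_{\mathrm{KL}}(p_x^*\,\|\,p_{\mathrm{pool}})$. It also covers arbitrary covariate weights $w_1$. Integrability in your Chebyshev step is automatic: on $[0,1]$ the candidate $G$ lies in $[s,1]$, so $\rho$ is bounded away from $0$ and $1$.

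\textbf{Upper bound.} Your construction is the paper's vanishing-hazard construction, reparametrized so that $p_0=m$ exactly. Your observation that the infimum is attained at $\theta=0$ is correct and consistent with the theorem, which asserts non-attainment only for $\theta<0$.

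\textbf{Identifiability.} The paper uses a concrete truth ($h_0^*=1$, $\theta^*=\log 2$) and differentiates log-ratios. Your reconstruction $1-G=(1-s)P_0$, followed by matching the log-ratio in $\log\bigl(1-(1-s)u\bigr)$, works for an arbitrary true baseline. It needs $s^*\in(0,1)$ and $\lambda^*\ne 1$, both of which hold automatically. Since the theorem is existential, you should still fix a concrete $h_0^*\in\mathcal H$.
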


\section{Experiments}

\begin{figure*}[b!]
    \centering
    
    \begin{subfigure}[b]{0.24\textwidth}
        \centering
        \includegraphics[width=\linewidth]{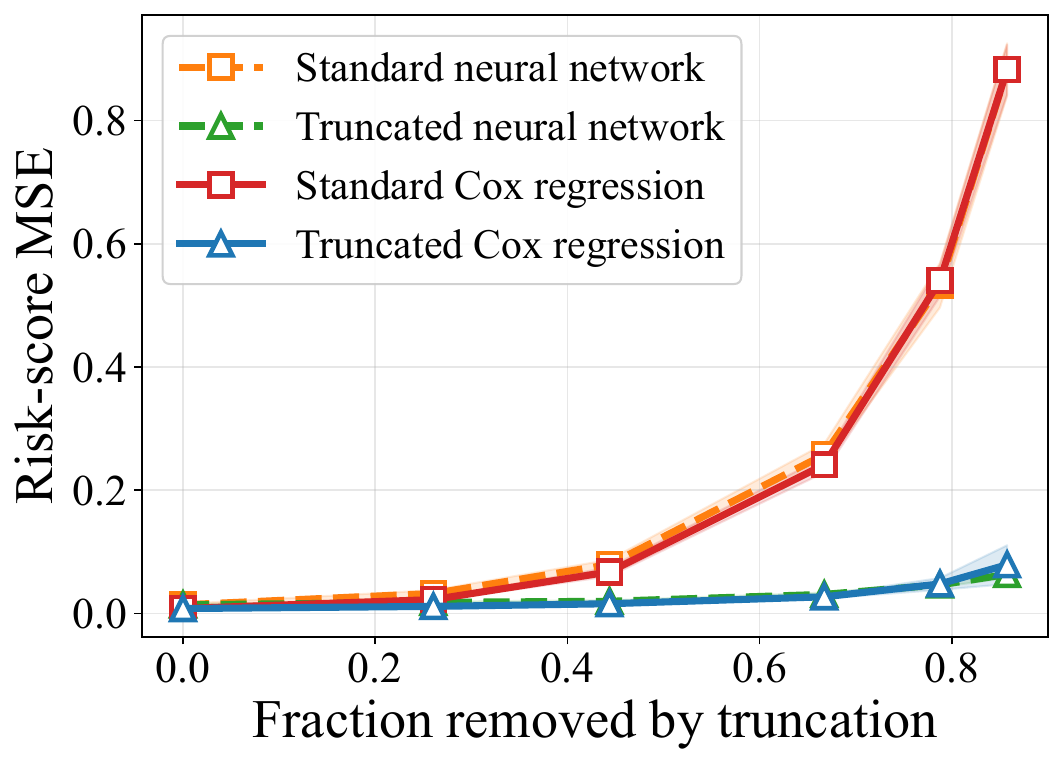}
        \caption{HEARTS.}
        \label{fig:unos}
    \end{subfigure}
    \hfill
    \begin{subfigure}[b]{0.24\textwidth}
        \centering
        \includegraphics[width=\linewidth]{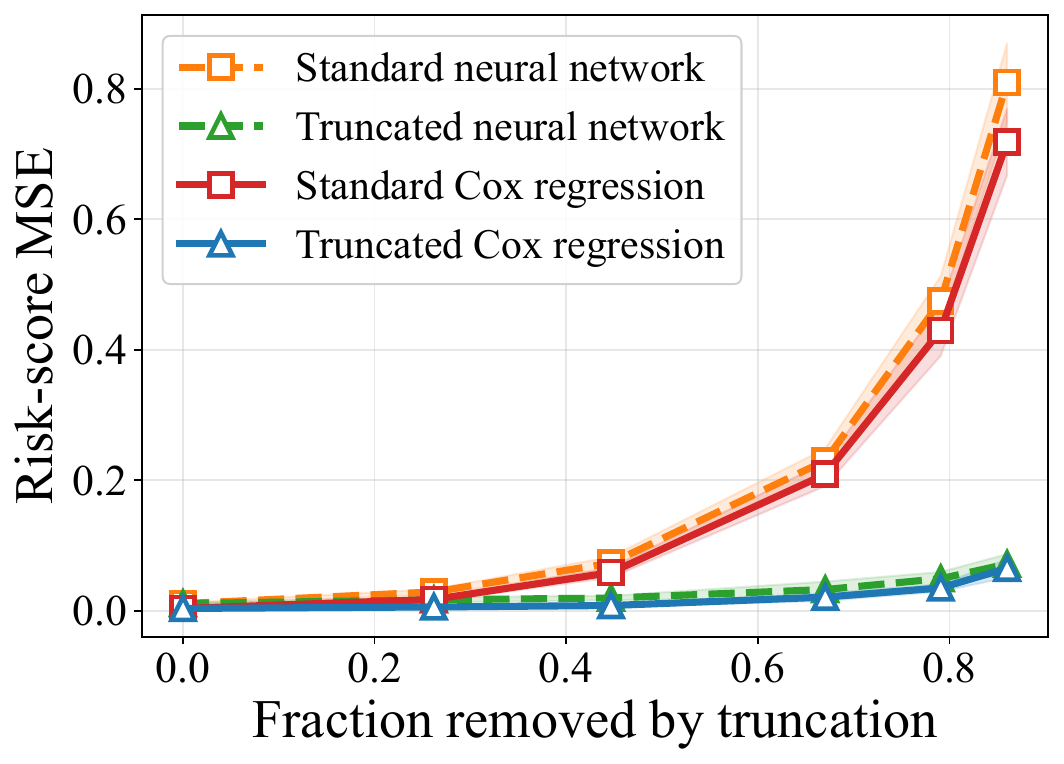}
        \caption{SUPPORT.}
        \label{fig:support}
    \end{subfigure}
    \hfill
    \begin{subfigure}[b]{0.24\textwidth}
        \centering
        \includegraphics[width=\linewidth]{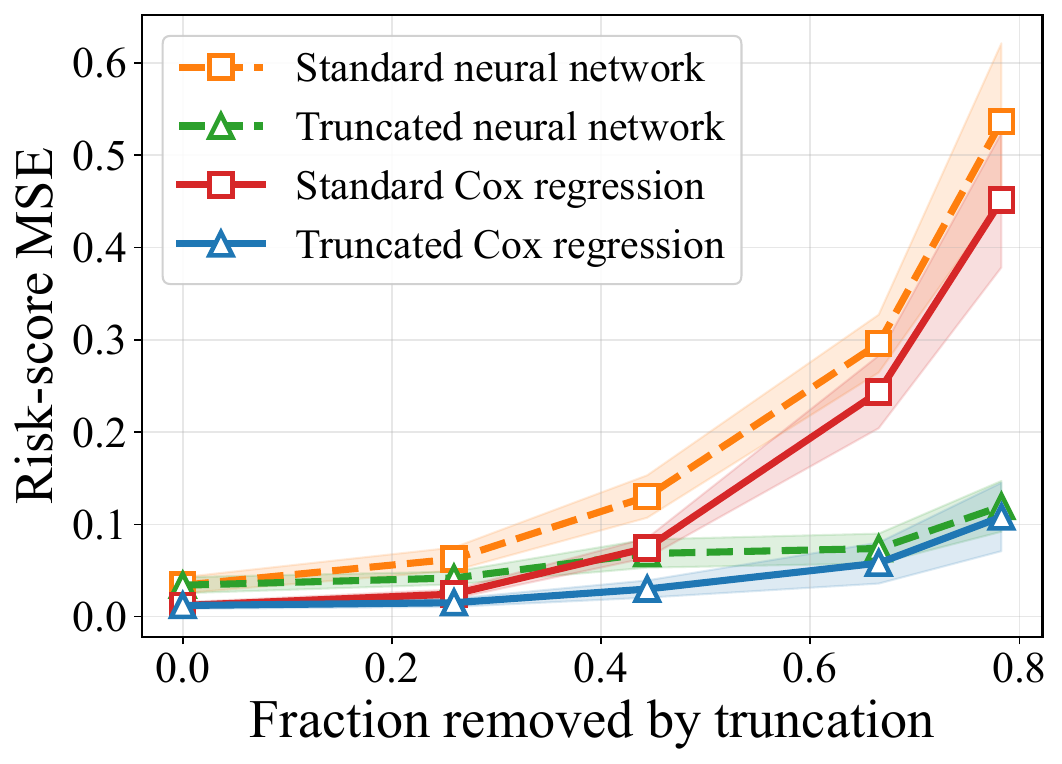}
        \caption{METABRIC.}
        \label{fig:metabric}
    \end{subfigure}
    \hfill
    \begin{subfigure}[b]{0.24\textwidth}
        \centering
        \includegraphics[width=\linewidth]{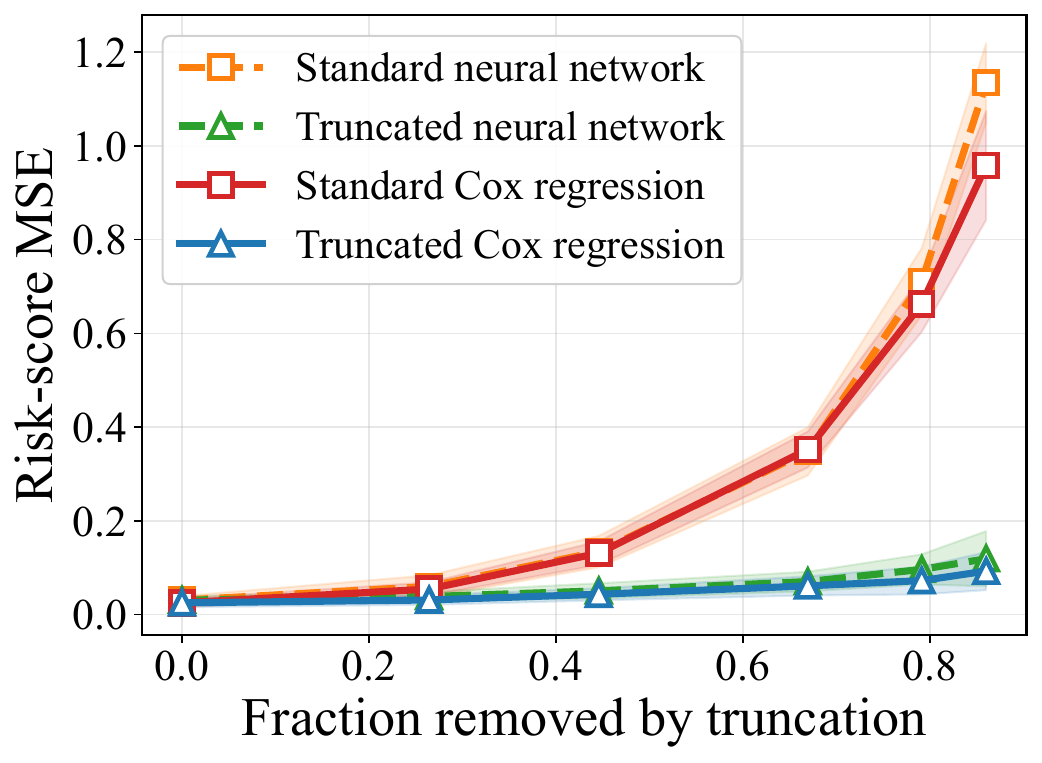}
        \caption{FLCHAIN.}
        \label{fig:flchain}
    \end{subfigure}
    
    \caption{Results of learning standard and truncated Cox regression and neural networks with first-order optimizers across datasets when the data-generating model is a Cox model. Shaded regions show standard deviation.}
    \label{fig:dataset-mse}
\end{figure*}

\begin{table*}[b!]
  \centering
  \resizebox{\linewidth}{!}{
  \begin{tabular}{l cc cc cc cc}
    \toprule
    & \multicolumn{2}{c}{METABRIC}
    & \multicolumn{2}{c}{SUPPORT}
    & \multicolumn{2}{c}{FLCHAIN}
    & \multicolumn{2}{c}{HEARTS} \\
    \cmidrule(lr){2-3}\cmidrule(lr){4-5}\cmidrule(lr){6-7}\cmidrule(lr){8-9}
    Method & Time (s) & MSE & Time (s) & MSE & Time (s) & MSE & Time (s) & MSE \\
    \midrule
    Adam \textcolor{gray}{(Ours)}
      & $0.79\pm0.38$ & $0.0542\pm0.0605$
      & $1.11\pm0.42$ & $0.0240\pm0.0263$
      & $1.13\pm0.77$ & $0.0625\pm0.0596$
      & $2.05\pm1.32$ & $0.0271\pm0.0197$ \\
    Projected SGD \textcolor{gray}{(Ours)}
      & $0.56\pm0.29$ & $0.0446\pm0.0474$
      & $0.86\pm0.33$ & $0.0231\pm0.0242$
      & $1.59\pm1.56$ & $0.0545\pm0.0433$
      & $5.15\pm3.58$ & $0.0316\pm0.0327$ \\
    EM
      & $0.16\pm0.10$ & $0.0462\pm0.0555$
      & $0.70\pm0.52$ & $0.0240\pm0.0277$
      & $1.40\pm0.55$ & $0.0877\pm0.0822$
      & $34.29\pm13.65$ & $0.0503\pm0.0475$ \\
    \bottomrule
  \end{tabular}}
   \caption{Computational efficiency of three optimizers for the same
    truncated Cox regression MLE across datasets with 10 independent runs per truncation level. Time is the mean wall-clock time to convergence (s) and MSE is the mean test risk-score error.}
  \label{tab:efficiency}
\end{table*}

We demonstrate that optimizing the truncated likelihood can be done with SGD in practical, semi-synthetic settings. We construct cohorts from four survival datasets: the \textit{United Network for Organ Sharing (UNOS)} registry for heart transplant graft survival which we refer to as HEARTS, SUPPORT for survival of hospitalized patients~\citep{Connors95:Controlled}, METABRIC for breast cancer survival~\citep{Curtis12:Genomic}, and FLCHAIN for mortality prediction from laboratory measurements~\citep{Dispenzieri12:Use}. For each dataset, we fit a survival model that defines the data-generating process, generate synthetic event times from this model, independently impose censoring at the same rate as in the original dataset, and then apply a known outcome-dependent truncation mechanism. We compare models trained on the resulting selected samples across increasing levels of truncation.

We first evaluate both standard Cox regression and the truncated Cox model from~\Cref{sec:prelims} optimized with projected SGD, as well as neural variants in which the linear risk score is replaced by a feed-forward neural network trained with Adam. This neural architecture is also known as \textit{DeepSurv}~\citep{Katzman18:Deepsurv}. Although our theoretical guarantees apply only to the linear Cox model, we also optimize the corresponding truncated likelihood for the neural network to test whether the same correction to the likelihood is useful empirically. We split each dataset into training and test sets and evaluate the \textit{mean squared error (MSE)} between learned and ground-truth risk scores on the untruncated test set. We consider both a linear Cox data-generating model (\Cref{fig:dataset-mse}) and a neural network data-generating model (\Cref{fig:dataset-mse-neural}). Full details of the data generation, truncation, censoring, and neural architecture are provided in~\Cref{sec:appendix-experimental-details}.

Across all four datasets, explicitly accounting for truncation substantially improves the accuracy as truncation becomes more severe. When truncation is ignored, the MSE increases rapidly with the amount of truncation. When the data-generating model is linear, truncated Cox regression closely recovers the true risk scores, while the neural model achieves similar performance. When the data-generating model is neural, the truncated neural network performs best.

These experiments complement our theoretical results by showing that the truncated Cox objective is practically optimizable by standard gradient-based methods. While our guarantees are specific to Cox regression, the same correction is effective for neural networks.

We also show the computational efficiency of first-order optimization compared to EM.  We implement a known-baseline variation of the truncated EM algorithm from~\citet{Rennert22:Cox} and compare its training time and MSE with projected SGD and Adam in~\Cref{tab:efficiency}. We include details of the EM algorithm in~\Cref{appendix:em}.  While our theoretical results only strictly apply to projected SGD, we show that, in practice, a common first order optimizer also converges quickly, namely Adam. On the three smaller datasets, METABRIC, SUPPORT, and FLCHAIN, all methods converge within 2 seconds and achieve comparable MSE. The difference becomes pronounced on HEARTS, which contains substantially more observations and covariates. EM requires over $30$ seconds to converge on average, compared with an average of $5.15$ seconds for projected SGD and $2.05$ seconds for Adam. EM is an order of magnitude slower than projected SGD and Adam. It also achieves worse MSE. To further push the scalability, we artificially enlarge the HEARTS dataset by generating Gaussian features and run our approaches on a semi-synthetic dataset with $N=750,000$ samples each with $2,000$ covariates. We refer to this dataset as HEARTS-Large (see \Cref{appendix:enlarged-hearts} for more details), and show the results in~\Cref{tab:scalability-aggregate}. A breakdown per truncation level is included in~\Cref{tab:scalability}. All algorithms achieve comparable MSE. However, Adam and projected SGD achieve this MSE around $100\times$ faster, converging in 20-30 seconds compared to roughly 45 minutes for EM.  These results highlight that direct first-order optimization scales significantly better than EM as the size and dimensionality of the dataset increases. 

\ifthenelse{\boolean{isSingleColumn}}{
    \renewcommand{\tabwidth}{0.6\linewidth}
}{
    \renewcommand{\tabwidth}{\linewidth}
}

\begin{table}[h!]
  \centering
\resizebox{\tabwidth}{!}{
  \begin{tabular}{l cc}
    \toprule
    Method & Time (s) & MSE \\
    \midrule
    Adam \textcolor{gray}{(Ours)}          & $25.4\pm20.3$      & $0.0281\pm0.0283$ \\
    Projected SGD \textcolor{gray}{(Ours)} & $22.0\pm13.4$      & $0.0276\pm0.0276$ \\
    EM                                     & $2607.1\pm1210.6$  & $0.0272\pm0.0271$ \\
    \bottomrule
  \end{tabular}}
  \caption{Average optimizer efficiency on  HEARTS-Large across truncation levels.}
  \label{tab:scalability-aggregate}
\end{table}
\section{Conclusions}
\label{sec:conclusion}

We analyzed survival analysis under outcome-dependent truncation and right censoring. We characterized when the parameters of Cox regression can be learned efficiently. When the baseline hazard is known, projected SGD converges efficiently to the true parameter values. We established negative results when the baseline hazard is unknown showing that a favorable optimization landscape does not always exist. 

Semi-synthetic experiments on four datasets demonstrated that first-order methods can effectively optimize truncated Cox likelihoods, reducing prediction error relative to models that ignore selection. The same correction also improved neural networks empirically. We also showed orders of magnitude speedups relative to EM. As survival models increasingly inform high-stakes decision making in medicine, efficiently learning a survival model under truncation is an important capability to scale models to higher dimensions.
\section*{Acknowledgments}

Tuomas Sandholm and his PhD students Ioannis Anagnostides and Itai Zilberstein are supported by NIH award A240108S001, the Vannevar Bush Faculty Fellowship ONR N00014-23-1-2876, and National Science Foundation grant RI-2312342. Itai Zilberstein is also supported by the NSF Graduate Research Fellowship Program under grants DGE2140739 \& DGE2631988. Arman Kilic is supported by NIH RO1 grant 5R01HL162882-03 which contributed to the funding for completion of this project. Arman Kilic is a speaker and consultant for Abiomed, Abbott, 3ive, and LivaNova, and founder and owner of QImetrix. All additional authors have no financial relationships to disclose. Any opinions, findings, and conclusions or recommendations expressed in this material are those of the author(s) and do not necessarily reflect the views of the funding agencies. 

\bibliography{references}

\clearpage

\appendix
\section{Additional related work}
\label{sec:related}

Our work sits at the intersection of two prevalent areas of research: efficient learning under truncation and survival analysis. 

Learning from truncated samples has been studied in regression settings without censoring~\citep{Ilyas20:Theoretical,Daskalakis19:Computationally}. These prior works proved that the parameters of standard regression methods, such as probit and logistic regression, can be optimized efficiently with first-order methods~\citep{Ilyas20:Theoretical}. More recent work gives computationally and statistically efficient algorithms for learning exponential families from truncated samples under general truncation sets~\citep{Lee23:Learning}. This work is closely related to our setting.  
When transforming event times by the cumulative baseline hazard, an uncensored event time follows an exponential distribution. Our setting differs in two challenging ways. The truncated Cox likelihood is generally not convex in the regression coefficients because the exponential rate is nonlinear in the parameters. We also consider right censoring, where censored observations reveal only a lower bound on the event time rather than a sample from the truncated exponential distribution. We show that, despite these challenges, the population likelihood has a unique stationary point and gradient descent moves toward the true parameter value. A separate line of work studies the harder setting in which the truncation mechanism itself is unknown~\citep{Lee24:Efficient,Kouridakis26:Linear}.

In the context of survival analysis, there is a large body of work on predicting time-to-event under the typical i.i.d. assumption and right censoring~\citep{Ishwaran08:Random,Kaplan58:Nonparametric,Wei92:Accelerated} including neural architectures~\citep{Lee18:Deephit,Nagpal21:Deep,Katzman18:Deepsurv,Zilberstein26:Aligning}. Despite these modern predictors, Cox regression~\citep{Cox72:Regression} remains one of the most widely used models. The predictions from a survival model are often used to inform decision making in high-stakes settings. One of the most prominent applications is organ allocation~\citep{Papalexopoulos24:Reshaping,OPTN25:ContinuousDistribution,Dickerson15:Futurematch,Zilberstein26:Near,Zilberstein26:Learning,Zilberstein26:Preference,Anagnostides26:Position,Berrevoets20:OrganITE}. 

Truncation in survival analysis has a long history, including classical methods for left-truncated and right-censored data~\citep{Tsai87:Note}, length-biased sampling~\citep{Wang96:Hazards,Qin10:Statistical}, and more general biased sampling schemes~\citep{Kim13:Unified,Yu16:Outcome}. More recent work focuses on Cox regression for doubly truncated data using inverse-probability weighting, pseudo-likelihood, and conditional likelihood~\citep{Mandel18:Inverse,Rennert18:Cox,Shen19:Pseudo,Rennert22:Cox,Shen20:Conditional,Chiou19:Transformation,Li20:Adaptive}. These approaches establish consistency and asymptotic properties, but do not characterize the global optimization landscape. Some likelihood-based approaches instead rely on iterative procedures such as EM with only local convergence guarantees. Another related line of work estimates causal treatment effects under selection bias, including truncation~\citep{Zhang20:Learning,Alaa17:Bayesian}.

Classical impossibility results in survival analysis establish nonidentifiability under partial observation, most notably for competing risks~\citep{Tsiatis75:Nonidentifiability}. In contrast to these results, we show  that even when the model is identifiable and correctly specified, learning an unknown baseline can produce a population likelihood with suboptimal local minima.

From a broader context, our work contributes to the study of why gradient-based methods succeed in nonconvex optimization, including results on avoiding strict saddle points~\citep{Ge15:Escaping,Lee16:Gradient,Lee19:First} and establishing that all local minima are globally optimal in structured problems~\citep{Ge16:Matrix}. In our paper, we establish a favorable population landscape for truncated and right-censored Cox regression with a known baseline hazard: despite nonconvexity, we show that the true parameter is the unique stationary point.





\section{Further background on Cox regression}
\label{sec:appendix-prelims}

We derive the maximum likelihood estimator for the Cox model. For independent observations, the contribution to the likelihood from sample $i$ is
\[ \big(h_i(T_i)\big)^{\delta_i} S_i(T_i)
=\big(h_0(T_i)e^{\vx_i^{\top}\vtheta}\big)^{\delta_i} \exp\big(-e^{\vx_i^{\top}\vtheta} H_0(T_i)\big). 
\]
As a result, the full likelihood reads
\begin{align*}
\cL(\vtheta)
&=\prod_{i=1}^n ( h_0(T_i)e^{\vx_i^{\top}\vtheta})^{\delta_i} \exp (-e^{\vx_i^{\top}\vtheta} H_0(T_i) ) \\
&=\left( \prod_{i=1}^n h_0(T_i)^{\delta_i} \right) \exp \left(\sum_{i=1}^n \delta_i \vx_i^{\top}\vtheta - \sum_{i=1}^n e^{\vx_i^{\top}\vtheta} H_0(T_i) \right).
\end{align*}

When the baseline hazard $h_0(\cdot)$ is known, the first term in the right-hand side, $\prod_{i=1}^n h_0(T_i)^{\delta_i}$, is independent of $\vtheta$. As a result, the log-likelihood---up to an additive constant---reads
\begin{equation}
\label{eq:loglik}
\ell(\vtheta)=\sum_{i=1}^n \delta_i \vx_i^{\top}\vtheta - \sum_{i=1}^n e^{\vx_i^{\top}\vtheta} H_0(T_i).
\end{equation}
We now observe that the log-likelihood is concave, making it amenable to standard optimization algorithms. Indeed, the gradient with respect to $\vtheta$ is
\begin{align*}
U(\vtheta) &= \nabla_{\vtheta} \ell(\vtheta) = \sum_{i=1}^n \delta_i\vx_i - \sum_{i=1}^n H_0(T_i) e^{\vx_i^{\top}\vtheta} \vx_i \\
&= \sum_{i=1}^n \big(\delta_i - H_0(T_i) e^{\vx_i^{\top}\vtheta}\big) \vx_i,
\end{align*}
so the Hessian can be expressed as
\begin{align*}
\nabla_{\vtheta}^2 \ell(\vtheta) &= -\sum_{i=1}^n H_0(T_i) e^{\vx_i^{\top}\vtheta} \vx_i\vx_i^{\top}.
\end{align*}
The Hessian is negative semidefinite, so the log-likelihood is concave. 

We now derive the likelihood function when we have interval truncation.  Recall that $\pi_i(\vtheta)$ denotes the probability that sample $i$ is included, while $\pi_i(t\mid\vtheta)$ denotes the joint probability that it is included and that $T_i^*>t$.

For sample $i$ with observed data $(T_i, \delta_i)$, we divide into two cases to capture the contribution to the likelihood. First, if $\delta_i=1$, then $T_i^*=T_i$, and the contribution to the likelihood is
\[ \cL_i(\vtheta) \propto \frac{h_i(T_i)S_i(T_i)}{\pi_i(\vtheta)}, \]
where we omit the factor $\rho(T_i)$, which is independent of $\vtheta$. Similarly, if $\delta_i=0$, then $T_i^*>T_i$, and the contribution is
\[ \cL_i(\vtheta) \propto \frac{\pi_i(T_i \mid \vtheta)}{\pi_i(\vtheta)}. \]

The full likelihood, up to factors independent of
$\vtheta$, is 
\[ \cL(\vtheta) \propto \prod_{i=1}^n \frac{ [h_i(T_i)S_i(T_i)]^{\delta_i} [\pi_i(T_i \mid \vtheta)]^{1 -\delta_i}}{\pi_i(\vtheta)}. \]



\section{Omitted proofs}

\TruncExpMean*

\begin{proof}
Let $p_\lambda$ denote the density of the weighted truncated
exponential distribution with rate $\lambda$, 
\[ p_\lambda(u) = \frac{\lambda e^{-\lambda u} \sum_{k=1}^K q_k\mathbbm{1}\{A_k<u<B_k\}}{\sum_{k=1}^K q_k \left(e^{-\lambda A_k}-e^{-\lambda B_k}\right)}. \]

Taking the expectation, we get
\[ \mu(\lambda) = \frac{ \sum_{k=1}^K q_k \int_{A_k}^{B_k}\lambda u e^{-\lambda u}du}{\sum_{k=1}^K q_k\left(e^{-\lambda A_k}-e^{-\lambda B_k}\right)}. \]

For each interval, we have
\begin{align*}
\int_{A_k}^{B_k}\lambda u e^{-\lambda u} du
&= \left[-u e^{-\lambda u}\right]_{A_k}^{B_k} + \left[-\frac{1}{\lambda}e^{-\lambda u}\right]_{A_k}^{B_k} \\
&= A_k e^{-\lambda A_k}-B_k e^{-\lambda B_k} + \frac{1}{\lambda} \left(e^{-\lambda A_k}-e^{-\lambda B_k}\right).
\end{align*}
The claim follows by substituting the above into the expectation.
\end{proof}

\TruncExpMonotonicity*

\begin{proof}
Let $p_{\lambda_1}$ and $p_{\lambda_2}$ denote the respective densities, and define
\[ \omega(u)=e^{-(\lambda_2-\lambda_1)u}. \]
The intervals and retention probabilities are the same for both distributions, therefore the densities satisfy
\[ p_{\lambda_2}(u) = \frac{\omega(u)p_{\lambda_1}(u)}{\E[\omega(U_1)]}. \]
Consequently,
\begin{align*}
\E_{\lambda_2}[\varphi(U_2)]-\E_{\lambda_1}[\varphi(U_1)]
&= \frac{\E_{\lambda_1}[\varphi(U_1)\omega(U_1)]}{\E_{\lambda_1}[\omega(U_1)]} - \E_{\lambda_1}[\varphi(U_1)] \\
&= \frac{\operatorname{Cov}(\varphi(U_1),\omega(U_1))}{\E_{\lambda_1}[\omega(U_1)]}.
\end{align*}

Let $U_1'$ be an independent copy of $U_1$. Then
\[ 2\operatorname{Cov}_{\lambda_1}(\varphi(U_1),\omega(U_1)) = \E_{\lambda_1}\left[ \bigl(\varphi(U_1)-\varphi(U_1')\bigr) \bigl(\omega(U_1)-\omega(U_1')\bigr) \right]. \]
Because $\varphi$ is nondecreasing and $\omega$ is strictly decreasing, the integrand is nonpositive, meaning the covariance is nonpositive. This proves the first claim.

To prove the second claim, if $\varphi(U_1)$ is not almost surely constant, then
\[ \pr\bigl(\varphi(U_1) \ne \varphi(U_1')\bigr) > 0. \]
In this case, the integrand is strictly negative because $\omega$ is strictly decreasing, implying the covariance is strictly negative, giving
\[ \E_{\lambda_1}[\varphi(U_1)] > \E_{\lambda_2}[\varphi(U_2)]. \]
\end{proof}

\TruncExpCensoring*

\begin{proof}
Since $\mu_c(\lambda)=\E[U\mid U>c]>c$, $m_{\lambda,c}$ increases on $[0,c]$, jumps upward immediately after $c$, and is constant on the interval $(c,\infty)$. This proves it is nondecreasing.

We prove the second statement by expanding the conditional mean,
\begin{align*}
\E[m_{\lambda,c}(U)]
&= \E[U\mathbbm{1}\{U\le c\}] + \mu_c(\lambda)\pr(U>c) \\
&= \E[U\mathbbm{1}\{U\le c\}] + \E[U\mid U>c]\pr(U>c) \\
&= \E[U\mathbbm{1}\{U\le c\}] + \E[U\mathbbm{1}\{U>c\}] \\
&= \E[U] = \mu(\lambda).
\end{align*}

Finally, suppose that $\pr(U\le c)>0$. Since $U$ has a density supported on the positive-weight intervals, there is some $k$ with $q_k>0$ and $A_k<c$. Therefore, we can choose $a,b\in(A_k,B_k)$ with $a<b<c$. On these points,
\[ m_{\lambda,c}(a)=a<b=m_{\lambda,c}(b), \]
so $m_{\lambda,c}$ is nonconstant on $\mathcal S$.
\end{proof}

\CoxTruncExp*

\begin{proof}
The density of $T_i^*$ is
\[ 
p_{T_i^*}(t) = \frac{ \rho(t)h_i(t;\vtheta)S_i(t;\vtheta)}{\pi_i(\vtheta)} =\frac{\rho(t) h_0(t)e^{\vx_i^\top\vtheta}\exp\left(-e^{\vx_i^\top\vtheta} H_0(t) \right)}{\pi_i(\vtheta)}.
\]
We perform a change of variable, defining $u= H_0(t)$. This gives $du= h_0(t)dt$. Since $H_0$ is strictly increasing,
\[t_{0,k}<t<t_{1,k} \Longleftrightarrow  H_0(t_{0,k})< u <  H_0(t_{1,k}). \]
Therefore, we can write the density of $H_0(T_i^*)$ for $u\ge0$ as
\[ 
p_{ H_0(T_i^*)}(u) = \frac{ e^{\vx_i^\top\vtheta} \exp\left(-e^{\vx_i^\top\vtheta}u\right) \sum_{k=1}^K q_k \mathbbm{1}\{ H_0(t_{0,k})<u< H_0(t_{1,k}) \}}{ \pi_i(\vtheta) }.
\]
We complete the proof by expanding and substituting the denominator,
\begin{align*}
\pi_i(\vtheta) &=
\sum_{k=1}^K q_k \bigl[ S_i(t_{0,k};\vtheta) - S_i(t_{1,k};\vtheta) \bigr] \\
&= \sum_{k=1}^K q_k  \left[ \exp\left(
-e^{\vx_i^\top\vtheta} H_0(t_{0,k}) \right) -  \exp\left(-e^{\vx_i^\top\vtheta} H_0(t_{1,k}) \right) \right].
\end{align*}

The resulting density defines a truncated exponential with parameters $\lambda = e^{\vx_i^\top\vtheta}$ and  $([A_k, B_k], q_k) = \left([ H_0(t_{0,k}), H_0(t_{1,k})],q_k \right)$ for $k=\{1, \dots, K\}$.
\end{proof}

\CoxLikelihoodGrad*

\begin{proof}
Dropping terms independent of $\vtheta$, the single-observation negative log-likelihood is
\[
\losslike(\vtheta) = \delta_i \left[ -\vx_i^\top\vtheta
+ \expshort H_0(T_i) \right] - (1-\delta_i)\log\pi_i(T_i\mid\vtheta)
+ \log\pi_i(\vtheta).
\]

We first differentiate $\pi_i(t \mid \vtheta)$. For every fixed $t$ with $\pi_i(t\mid\vtheta)>0$,
\[ \pi_i(t\mid\vtheta) = \int_t^\infty \rho(s)h_i(s;\vtheta)S_i(s;\vtheta)ds. \]
The underlying density satisfies
\[ 
\nabla_{\vtheta} \left[h_i(s;\vtheta) S_i(s;\vtheta)\right] = \left[1-e^{\vx_i^\top\vtheta}H_0(s)\right] h_i(s;\vtheta)S_i(s;\vtheta)\vx_i.
\]
Since $\rho$ does not depend on $\vtheta$, it follows that
\begin{align*}
\nabla_{\vtheta}\pi_i(t\mid\vtheta) &= \left[ \pi_i(t\mid\vtheta) - \expshort \int_t^\infty H_0(s)\rho(s)h_i(s;\vtheta)S_i(s;\vtheta)ds \right]\vx_i \\
&= \pi_i(t\mid\vtheta) \left[ 1-\expshort\mu_i(t\mid\vtheta)\right]\vx_i,
\end{align*}
where we used
\[ \mu_i(t\mid\vtheta) = \frac{ \displaystyle
\int_t^\infty H_0(s)\rho(s)h_i(s;\vtheta)S_i(s;\vtheta)ds}{\pi_i(t\mid\vtheta)}. \]
Dividing by $\pi_i(t\mid\vtheta)$ gives
\[ \nabla_{\vtheta}\log\pi_i(t\mid\vtheta) = \left[ 1-\expshort\mu_i(t\mid\vtheta) \right]\vx_i. \]

We complete the proof by substituting the logarithmic derivative into the expression
for $\losslike$.
\end{proof}

\CoxStatPoint*

\begin{proof}
Write $\lambda=e^z$ and $\lambda^*=e^{z^*}$. By~\Cref{lem:cox-truncated-exponential},  $U:=H_0(T^*)$
follows a weighted truncated exponential distribution with rate $\lambda^*$. We write $\E_\lambda$ for expectation over the joint distribution of $U$ and $C$, where $U$ follows the weighted truncated exponential distribution with rate $\lambda$ and $C$ follows the fixed censoring distribution, independently of $U$. $\E_{\lambda^*}$ and $\E_\lambda$ differ only in the distribution of $U$. Let $\mu(\lambda)=\E_\lambda[U]$.

For a censoring threshold $c$, define $m_{\lambda,c}$ to be the function from~\Cref{lem:truncated-exponential-censoring}. By~\Cref{lem:cox-likelihood-gradient}, taking the expectation of one observation's gradient gives
\[ \losspop_{\vx}'(z) = \lambda\left( \E_{\lambda^*} \left[m_{\lambda,H_0(C)}(U) \right] - \mu(\lambda) \right). \]
Since $C$ is independent of $U$ and its distribution does not depend on $\lambda$, applying~\Cref{lem:truncated-exponential-censoring} gives
\[ \mu(\lambda) = \E_{\lambda} \left[ m_{\lambda,H_0(C)}(U) \right]. \]
Dividing $\losspop_{\vx}'(z)$ by $\lambda$ and substituting $\mu(\lambda)$ gives,
\begin{equation}
\label{eq:population-gradient-comparison}
\frac{\losspop_{\vx}'(z)}{\lambda} = \E_{\lambda^*} \left[m_{\lambda,H_0(C)}(U)\right] - \E_{\lambda} \left[m_{\lambda,H_0(C)}(U)\right].
\end{equation}

Now we consider three cases comparing $\lambda$ to $\lambda^*$. Suppose first that $\lambda>\lambda^*$. For every fixed censoring time $c$,~\Cref{lem:truncated-exponential-censoring} tells us that the function $m_{\lambda,H_0(c)}$ is
nondecreasing. Holding this function fixed,~\Cref{lem:truncated-exponential-monotonicity} gives
\[ \E_{\lambda^*} \left[m_{\lambda,H_0(c)}(U)\right] \ge \E_{\lambda} \left[m_{\lambda,H_0(c)}(U)\right]. \]

Since we assume $\pr(\delta=1)>0$,
\[ \E_C\left[ \pr_{\lambda^*}(U\le H_0(C)) \right]>0. \]
Therefore, with positive probability, the censoring time  falls into a set where $m_{\lambda,H_0(c)}$ is nonconstant. On this set, the inequality above is strict, giving
\[ \losspop_{\vx}'(z)>0 \text{ whenever }\lambda>\lambda^*. \]

If $\lambda<\lambda^*$, the same argument with the two rates reversed gives $F_{\vx}'(z)<0$. Finally, when $\lambda=\lambda^*$, the two expectations in \eqref{eq:population-gradient-comparison} coincide, so $\losspop_{\vx}'(z^*)=0$.

Since $\lambda=e^z$ is strictly increasing, 
\[ (z-z^*)\losspop_{\vx}'(z)>0 \text{ for every }z\ne z^*.\]
Therefore, $z^*$ is the unique stationary point. Since $\losspop_{\vx}$ is strictly decreasing to the left of $z^*$ and strictly increasing to its right, $z^*$ is also the unique global minimizer.
\end{proof}

\CoxPopUnique*

\begin{proof}
For a given covariate vector $\vx$, let $z=\vx^\top\vtheta$ and $z^*=\vx^\top\vtheta^*$. Since the likelihood depends on $\vtheta$ only through $z$, we write
\[ \losspop_{\vx}(z) = \E_{\vtheta^*}[-\ell(z;T,\delta) \mid \vx], \]
where the expectation is under the true distribution.
By \Cref{thm:cox-conditional-population},
\begin{equation}\label{eq:sign_matching}
(z-z^*)\losspop_{\vx}'(z)\ge0,
\end{equation}
with strict inequality whenever $z\ne z^*$, and $\losspop_{\vx}'(z^*)=0$.

To analyze the joint population gradient, we use the chain rule and differentiate under the expectation to obtain
\[ \nabla_{\vtheta}\losspop(\vtheta) = \E_{\vx} \left[\losspop_{\vx}'(\vx^\top\vtheta)\vx \right]. \]

Taking the inner product with $\vtheta-\vtheta^*$ gives
\begin{align*}
\left\langle \nabla_{\vtheta}\losspop(\vtheta), \vtheta-\vtheta^* \right\rangle
&= \E \left[\losspop_{\vx}'(z)\vx^\top(\vtheta-\vtheta^*) \right]\\
&= \E \left[\losspop_{\vx}'(z)(z-z^*)\right].
\end{align*}
By \eqref{eq:sign_matching}, the integrand is nonnegative. Since $\mathbb{E}[\vx \vx^\top] \succ 0$, for any $\vtheta \ne \vtheta^*$, the event $z\ne z^*$ has strictly positive probability. Therefore,
\begin{equation*}
\left\langle \nabla_{\vtheta}\losspop(\vtheta),  (\vtheta-\vtheta^* ) \right\rangle>0 \text{ for all } \vtheta \ne \vtheta^*.
\end{equation*}
Hence, no incorrect parameter is stationary. At the true parameter,
$\losspop_{\vx}'(z^*)=0$ for almost every $\vx$, and therefore $\nabla_{\vtheta}\losspop(\vtheta^*)=\mathbf{0}$. So, $\vtheta^*$ is the unique stationary point.
\end{proof}

\CoxSGDFiniteTime*

\begin{proof}
We first obtain quantitative bounds on the population gradient. The derivative of a weighted truncated exponential mean with respect to its rate is the negative of its variance. Differentiating \Cref{lem:cox-likelihood-gradient} at the truth and applying the law of total variance therefore gives
\[ \nabla^2\losspop(\vtheta^*)=\E_{\vtheta^*}\left[e^{2\vx^\top\vtheta^*}\operatorname{Var}_{\vtheta^*}\left(\E_{\vtheta^*}[H_0(T^*)\mid\vx,T,\delta]\mid\vx\right)\vx\vx^\top\right], \]
where all expectations are under the true sampled distribution. For almost every $\vx$, uncensored events have positive probability and reveal a nonconstant, continuously distributed transformed event time. The conditional variance is therefore positive. Covariate nondegeneracy then implies $\nabla^2\losspop(\vtheta^*)\succ0$.

Since $\nabla\losspop(\vtheta^*)=\mathbf0$, differentiability of the gradient at $\vtheta^*$ gives constants $0<\kappa\le L<\infty$ such that
\[ \left\langle\nabla\losspop(\vtheta),\vtheta-\vtheta^*\right\rangle\ge\kappa\|\vtheta-\vtheta^*\|_2^2 \text{ and } \|\nabla\losspop(\vtheta)\|_2\le L\|\vtheta-\vtheta^*\|_2 \]
in a neighborhood of $\vtheta^*$. Outside a sufficiently small open ball around $\vtheta^*$, the ratios of the left-hand sides to $\|\vtheta-\vtheta^*\|_2^2$ and $\|\vtheta-\vtheta^*\|_2$, respectively, are continuous on the remaining compact subset of $\Theta$. The first ratio is strictly positive by \Cref{thm:cox-population-uniqueness}. Adjusting $\kappa$ and $L$ therefore extends both bounds to all of $\Theta$.

Let $f_{i,j}$ denote the single-observation negative log-likelihood for $(\vx_{i,j},T_{i,j},\delta_{i,j})$. Each minibatch is independent of the preceding iterates. Conditional on $\vtheta^{(i)}$, its gradients are therefore independent and unbiased, so
\[ \E\left[\left\|\frac1b\sum_{j=1}^b\nabla f_{i,j}(\vtheta^{(i)})-\nabla\losspop(\vtheta^{(i)})\right\|_2^2\mid\vtheta^{(i)}\right]\le\frac{B^2}{b}. \]
Expanding the squared distance after projection and applying the preceding bounds gives
\[ \E\left[\|\vtheta^{(i+1)}-\vtheta^*\|_2^2\mid\vtheta^{(i)}\right]\le(1-2\kappa\eta_i+L^2\eta_i^2)\|\vtheta^{(i)}-\vtheta^*\|_2^2+\frac{\eta_i^2B^2}{b}. \]
The step-size choice ensures $\eta_i\le\kappa/L^2$. Taking expectations thus yields
\[ \E\left[\|\vtheta^{(i+1)}-\vtheta^*\|_2^2\right]\le\left(1-\frac{2}{i+t_0}\right)\E\left[\|\vtheta^{(i)}-\vtheta^*\|_2^2\right]+\frac{4B^2}{\kappa^2b(i+t_0)^2}. \]
Multiplying by $(i+t_0)(i+t_0-1)$, using $(i+t_0-1)/(i+t_0)\le1$, and summing over $i=0,\ldots,t-1$ cancels consecutive terms and gives
\[ (t+t_0-2)(t+t_0-1)\E\left[\|\vtheta^{(t)}-\vtheta^*\|_2^2\right]\le(t_0-2)(t_0-1)\|\vtheta^{(0)}-\vtheta^*\|_2^2+\frac{4B^2t}{\kappa^2b}. \]
Since $(t+t_0-2)(t+t_0-1)\ge t^2$ and $(t_0-2)(t_0-1)\le t_0^2$, the claimed bound follows.
\end{proof}

\CoxUnknownBaselineLocalMin*

\begin{proof}


We start by defining a Cox model with one regression coefficient $\theta$ and one unknown baseline parameter $\gamma$. Let $\tau=1/2$, let the covariate $x\in\{0,1\}$ have equal probability in the sampled population, and retain individuals exactly when $0<T^*<\tau$. Assume no censoring. Define the hazard as $h(t\mid x;\vbeta)=e^{x\theta}g_\gamma(t)$, where $\vbeta=(\theta,\gamma) \in (0,\infty)\times\mathbb{R}$. We will construct the baseline family $g_\gamma$ using fixed constants ${\lambda'},a,M$. These constants are \textit{knobs} that we will tune to get the desired outcomes from our construction. The construction proceeds in three main steps: (1) choose ${\lambda'}$ to eliminate the regression gradient; (2) choose $a$ to eliminate the baseline-shape gradient; and (3) choose $M$ to make the Hessian positive definite. 

We construct the baseline family to satisfy $g_1(t)=1/(1-t)$ and $g_0(t)=1$ on $[0,\tau]$. The true parameter $\vbeta^*=(\log3,1)$ then generates the sampled densities
\[ p_0^*(t)=2 \text{ and } p_1^*(t)=\frac{24}{7}(1-t)^2 \text{ for } t \in (0,\tau). \]
Denote $\E_x^*$ as the expectation under $p_x^*$ and let $m_x=\E_x^*[T]$. Direct integration gives $m_0=1/4$, $m_1=11/56$, $\E_0^*[T^2]=1/12$, and $\E_1^*[T^2]=2/35$.

At the incorrect parameter value $\gamma=0$, the sampled distribution in group $x$ is exponential with rate $e^{x\theta}$, truncated to $(0,\tau)$. Since $\theta$ affects only group $x=1$, we choose its rate to match that group's true mean. Let ${\lambda'}>0$ be the unique solution of
\[ \mu({\lambda'})=\frac{11}{56} \text{ where } \mu(\lambda):=\frac1\lambda-\frac{1}{2(e^{\lambda/2}-1)}. \]
The function $\mu(\lambda)$ computes the mean of an exponential distribution of rate $\lambda$ truncated to $(0,\tau)$. This solution exists because $\mu$ is continuous and strictly decreasing from $1/4$ to zero. The bounds $e>8/3$ and $e^{3/2}<9/2$ give
\[ \mu(2)>\frac15>\frac{11}{56}>\frac4{21}>\mu(3), \]
so $2<{\lambda'}<3$. For completeness, the upper bound follows from
\[ e^{3/2}\le \sum_{j=0}^3\frac{(3/2)^j}{j!}+\frac{(3/2)^4}{4!}\frac{1}{1-3/10}=\frac{2011}{448}<\frac92. \]
Define the false parameter $\vbeta'=(\log{\lambda'},0)$. In particular, $\theta'=\log{\lambda'}<\log3=\theta^*$.

Mean matching will make the regression derivative vanish at $\vbeta'$. We next choose how the baseline varies so that the derivative of the population loss with respect to $\gamma$ also vanishes there. Writing $\lambda'_x=(\lambda')^x$, define, for $j=1,2$,
\[ A_j:=\frac12\sum_{x=0}^1\left[-\E_x^*[T^j]+\frac{\lambda'_x}{j+1}\E_x^*[T^{j+1}]+\frac{\lambda'_x\tau^{j+1}}{(j+1)(e^{\lambda'_x\tau}-1)}\right]. \]
The constant $A_j$ is the derivative of the population loss at $s=0$ when the constant baseline is replaced by $\exp(s t^j)$ on the retained interval, with the rates for $x=0$ and $x=1$ fixed at $1$ and ${\lambda'}$.

Using the moments above and the identity ${\lambda'}\tau/(e^{{\lambda'}\tau}-1)=1-{\lambda'} m_1$ gives
\[ A_1=\frac12\left[-\frac5{24}+\frac{1}{8(e^{1/2}-1)}+\frac3{56}-\frac{23{\lambda'}}{1120}\right]<0. \]

Indeed, $e^{1/2}>1+1/2+1/8+1/48=79/48$ implies
\[ -\frac5{24}+\frac{1}{8(e^{1/2}-1)}<-\frac{11}{744}<-\frac1{80}, \]
while ${\lambda'}>2$ gives $3/56-23{\lambda'}/1120<1/80$. Define the constant $a:=A_2/A_1$ which will be used to cancel the derivative of the baseline-shape, while the sign $A_1<0$ will let us increase the curvature around $\vbeta'$.

To define the full Cox model, we will define $g_\gamma$. Fix a smooth function $\chi:[0,\infty)\to[0,1]$ equal to one on $[0,\tau]$ and zero on $[3/4,\infty)$. For a constant $M\ge0$ to be chosen below, define
\[ g_\gamma(t):=\exp\left\{\chi(t)\left[-\gamma^3\log(1-t)+(\gamma-\gamma^3)(t^2-at)+\frac{M}{2}(\gamma^3-\gamma^2)t\right]\right\} \text{ for } t \in [0,3/4), \]
and set $g_\gamma(t)=1$ for $t\ge3/4$. This family is smooth and strictly positive, satisfies the required identities at $\gamma=0,1$, and has cumulative hazard $G_\gamma(t):=\int_0^t g_\gamma(u)du$ diverging as $t\to\infty$. The cutoff only ensures that the baseline is well-defined outside the retained interval; it does not affect the sampled distributions.

The sampled density and population negative log-likelihood of our Cox model are
\[ p_x(t;\vbeta)=\frac{e^{x\theta}g_\gamma(t)\exp[-e^{x\theta}G_\gamma(t)]}{1-\exp[-e^{x\theta}G_\gamma(\tau)]} \text{ for } t \in (0, \tau), \]
\[ \losspop_M(\vbeta):=-\frac12\sum_{x=0}^1\int_0^\tau p_x^*(t)\log p_x(t;\vbeta)dt. \]
The subscript denotes the dependence on the constant $M$.

At $\vbeta'$, differentiation with respect to $\theta$ gives
\[ \partial_\theta \losspop_M(\vbeta')=\frac12{\lambda'}\bigl(m_1-\mu({\lambda'})\bigr)=0. \]
Therefore, matching the mean when $x=1$ makes the regression derivative vanish. On the retained interval, $\partial_\gamma\log g_\gamma(t)|_{\gamma=0}=t^2-at$. Differentiating the negative log-density therefore gives
\[ \partial_\gamma \losspop_M(\vbeta')=A_2-aA_1=0. \]
Hence $\vbeta'$ is stationary for every $M$.

Stationarity alone does not rule out a saddle point. We now use $M$ to make the loss increase in every direction around $\vbeta'$. Let $D,w,b$ be the entries of the Hessian when $M=0$,
\[ \nabla^2\losspop_0(\vbeta')=\begin{pmatrix}D&w\\w&b\end{pmatrix}. \]
The regression entry satisfies
\[ D=\frac12(\lambda')^2\operatorname{Var}_{\lambda'}(T)>0, \]
where the variance is under the exponential distribution of rate $\lambda'$ truncated to $(0,\tau)$. This follows from $\mu'(\lambda)=-\operatorname{Var}_\lambda(T)$ and the equality of the means.

Changing $M$ leaves the value and first shape derivative of $g_\gamma$ at $\gamma=0$ unchanged, so it changes neither stationarity nor the Hessian entries $D,w$. However,
\[ \left.\partial_{\gamma\gamma}\log g_\gamma(t)\right|_{\gamma=0}=-Mt, \text{ and } \left.\partial_{\gamma\gamma}g_\gamma(t)\right|_{\gamma=0}=(t^2-at)^2-Mt. \]
Relative to $M=0$, the second shape derivative of $G_\gamma(t)$ consequently changes by $-Mt^2/2$. Substituting these changes into the loss yields
\[ \partial_{\gamma\gamma}\losspop_M(\vbeta')-\partial_{\gamma\gamma}\losspop_0(\vbeta')=\frac M2\sum_{x=0}^1\left[m_x-\frac{\lambda'_x}{2}\E_x^*[T^2]-\frac{\lambda'_x\tau^2}{2(e^{\lambda'_x\tau}-1)}\right]=-MA_1. \]
Therefore,
\[ \nabla^2\losspop_M(\vbeta')=\begin{pmatrix}D&w\\w&b-MA_1\end{pmatrix}. \]
All entries are finite, since the sampled densities and their parameter derivatives are smooth and locally bounded on the bounded observation interval. Because $A_1<0$, increasing $M$ increases the baseline-direction curvature without changing the other Hessian entries. Fix
\[ M:=1+\frac{\max\{0,w^2/D-b\}}{-A_1}. \]
Then $b-MA_1>w^2/D$, so the Schur complement gives $\nabla^2 \losspop_M(\vbeta')\succ0$. Writing $\losspop=\losspop_M$ for this fixed choice, we have
\[ \nabla \losspop(\vbeta')=\mathbf{0} \text{ and } \nabla^2\losspop(\vbeta')\succ 0. \]
So $\vbeta'$ is a strict local minimum in the interior of the parameter space.

We next show that no two distinct parameter pairs generate the same sampled conditional distributions. This establishes identifiability and rules out an alternative representation of the true model. Suppose $\vbeta=(\theta,\gamma)$ and $\widetilde\vbeta=(\widetilde\theta,\widetilde\gamma)$ generate the same sampled densities for both covariate groups. Since
\[ \frac{d}{dt}\log\frac{p_1(t;\vbeta)}{p_0(t;\vbeta)}=-(e^\theta-1)g_\gamma(t), \]
equality of the densities implies
\[ (e^\theta-1)g_\gamma(t)=(e^{\widetilde\theta}-1)g_{\widetilde\gamma}(t). \]
Taking $t\downarrow0$ and using $g_\gamma(0)=1$ gives $\theta=\widetilde\theta$. Since $\theta>0$, the common multiplier is nonzero, so $g_\gamma=g_{\widetilde\gamma}$ on the retained interval. Moreover,
\[ \frac{d^3}{dt^3}\log g_\gamma(t)=\frac{2\gamma^3}{(1-t)^3}. \]
The polynomial terms used to control the gradient and curvature disappear after three derivatives. Equality of the baseline shapes therefore implies $\gamma^3=\widetilde\gamma^3$, hence $\gamma=\widetilde\gamma$. We conclude that the joint model is identifiable.

Finally, the population loss satisfies
\[ \losspop(\vbeta)-\losspop(\vbeta^*)=\frac12\sum_{x=0}^1 D_{\mathrm{KL}} \left(p_x^* \middle\|p_x(\cdot;\vbeta)\right) \ge0, \]
with equality only at $\vbeta=\vbeta^*$ by identifiability. So $\vbeta^*$ is the unique global minimizer. Since $\theta'<\theta^*$, the strict local minimum $\vbeta'$ satisfies $\losspop(\vbeta')>\losspop(\vbeta^*)$, completing the proof.
\end{proof}

\CoxFlatProfile*

\begin{proof}
Consider a binary covariate $x\in\{0,1\}$, with both values occurring with probability $1/2$ in the sampled population. We retain individuals exactly when $0<T^*<1$ and assume no censoring. Let $\theta^*=\log 2$ and $h_0^*(t)=1$. The true sampled densities are
\[ p_0^*(t)=\frac{e^{-t}}{1-e^{-1}} \text{ and } p_1^*(t)=\frac{2e^{-2t}}{1-e^{-2}} \text{ for } 0<t<1. \]

For a candidate parameter $(\theta,h_0)$ with $h_0\in\mathcal H$, let $H_0(t)=\int_0^t h_0(s)ds$. The corresponding sampled density for covariate $x$ is
\[
p_x(t;\theta,h_0)=\frac{e^{x\theta}h_0(t)e^{-e^{x\theta}H_0(t)}}{1-e^{-e^{x\theta}H_0(1)}} \text{ for } 0<t<1.
\]
The population negative log-likelihood is
\[
\losspop(\theta,h_0)=-\frac12\sum_{x=0}^1\int_0^1 p_x^*(t)\log p_x(t;\theta,h_0)dt.
\]
Define the pooled true density and its loss by
\[
p_{\mathrm{pool}}(t):=\frac{p_0^*(t)+p_1^*(t)}{2} \text{ and } \losspop_{\mathrm{pool}}:=-\int_0^1 p_{\mathrm{pool}}(t)\log p_{\mathrm{pool}}(t)dt.
\]

Now, fix $\theta\le0 < \theta^*$. For every $h_0\in\mathcal H$,
\[
\frac{d}{dt}\log\frac{p_1(t;\theta,h_0)}{p_0(t;\theta,h_0)}=-(e^\theta-1)h_0(t)\ge0,
\]
so the candidate likelihood ratio $p_1/p_0$ is nondecreasing. In contrast,
\[
\frac{p_1^*(t)}{p_0^*(t)}=\frac{2(1-e^{-1})}{1-e^{-2}}e^{-t}
\]
is strictly decreasing. Therefore, when $\theta\le0$, the candidate model does not have the same strictly decreasing ratio of the likelihoods that the true model has.

We first show that every such candidate has population loss at least $\losspop_{\mathrm{pool}}$. Define
\[
\eta(t):=\frac{p_1^*(t)}{2p_{\mathrm{pool}}(t)}.
\]
Since $p_1^*/p_0^*$ is strictly decreasing, $\eta$ is strictly decreasing. In addition,
\[
p_0^*(t)=2p_{\mathrm{pool}}(t)(1-\eta(t)) \text{ and } p_1^*(t)=2p_{\mathrm{pool}}(t)\eta(t).
\]
Using $-\log v\ge1-v$, we obtain
\[
\losspop(\theta,h_0)-\losspop_{\mathrm{pool}}\ge\frac12\sum_{x=0}^1\int_0^1 p_x^*(t)\left(1-\frac{p_x(t;\theta,h_0)}{p_{\mathrm{pool}}(t)}\right)dt.
\]
Substituting the expressions above for $p_0^*$ and $p_1^*$ gives
\[
\losspop(\theta,h_0)-\losspop_{\mathrm{pool}}\ge-\int_0^1\eta(t)\left(p_1(t;\theta,h_0)-p_0(t;\theta,h_0)\right)dt.
\]
Since both candidate densities integrate to one, the right-hand side can be written as
\[
-\operatorname{Cov}_{T\sim p_0(\cdot;\theta,h_0)}\left(\eta(T),\frac{p_1(T;\theta,h_0)}{p_0(T;\theta,h_0)}\right).
\]
The covariance is nonpositive because $\eta$ is decreasing while $p_1/p_0$ is nondecreasing, so these two functions are oppositely monotone under $T\sim p_0(\cdot;\theta,h_0)$. Hence
\[
\losspop(\theta,h_0)\ge\losspop_{\mathrm{pool}}.
\]
When $\theta<0$, the ratio $p_1/p_0$ is strictly increasing because $h_0(t)>0$, while $\eta$ is strictly decreasing. Since $p_0(\cdot;\theta,h_0)$ has positive density on $(0,1)$, the covariance is strictly negative and therefore
\[
\losspop(\theta,h_0)>\losspop_{\mathrm{pool}}
\]
for every $h_0\in\mathcal H$.

We next show that $\losspop_{\mathrm{pool}}$ can nevertheless be approached arbitrarily closely for every fixed $\theta\le0$. Extend the formula for $p_{\mathrm{pool}}$ smoothly to $t\ge0$. Let $\chi:[0,\infty)\to[0,1]$ be smooth, equal to one on $[0,1]$, and equal to zero on $[2,\infty)$. For $\varepsilon>0$, define
\[
h_{0,\varepsilon}(t):=\varepsilon\chi(t)p_{\mathrm{pool}}(t)+1-\chi(t).
\]
This baseline belongs to $\mathcal H$ since it is smooth and strictly positive, and it equals one for $t\ge2$, so its cumulative hazard diverges at infinity. On the retained interval, $h_{0,\varepsilon}(t)=\varepsilon p_{\mathrm{pool}}(t)$. Since $\int_0^1 p_{\mathrm{pool}}(t)dt=1$, the corresponding sampled density satisfies
\[
\frac{p_{x,\varepsilon}(t)}{p_{\mathrm{pool}}(t)}=\frac{\varepsilon e^{x\theta}\exp\left[-\varepsilon e^{x\theta}\int_0^t p_{\mathrm{pool}}(s)ds\right]}{1-e^{-\varepsilon e^{x\theta}}}.
\]
As $\varepsilon\downarrow0$, the right-hand side converges uniformly to one on $[0,1]$. Therefore,
\[
\losspop(\theta,h_{0,\varepsilon})\longrightarrow\losspop_{\mathrm{pool}}.
\]
Combining this with the lower bound above yields
\[
\widehat{\losspop}(\theta)=\losspop_{\mathrm{pool}}
\]
for every $\theta\le0$. So, the profiled loss is constant on $(-\infty,0]$. Moreover, when $\theta<0$, the strict lower bound shows that no baseline $h_0\in\mathcal H$ attains this infimum.

It remains to verify that the true regression coefficient is identifiable. Suppose some $(\theta,h_0)$ satisfies $p_x(\cdot;\theta,h_0)=p_x^*$ for both $x=0$ and $x=1$. Equality of the derivatives of the two log-density ratios gives
\[
(e^\theta-1)h_0(t)=1 \text{ for } 0<t<1.
\]
So $h_0$ is constant on the retained interval. For $x=0$, the logarithmic derivative of the candidate density is $-h_0$, while the logarithmic derivative of $p_0^*$ is $-1$. We conclude that $h_0(t)=1$ on $(0,1)$, and the preceding identity gives $e^\theta-1=1$. Hence $\theta=\log2=\theta^*$, so the true regression coefficient is identifiable.

Finally, because the true model belongs to the candidate class, for every
$h_0\in\mathcal H$,
\[
\losspop(\theta^*,h_0)-\losspop(\theta^*,h_0^*)=\frac12\sum_{x=0}^1D_{\mathrm{KL}}\left(p_x^*\middle\|p_x(\cdot;\theta^*,h_0)\right)\ge0.
\]
Hence
\[
\widehat{\losspop}(\theta^*)=\losspop(\theta^*,h_0^*).
\]
Moreover,
\[
\losspop_{\mathrm{pool}}-\widehat{\losspop}(\theta^*)=\frac12\sum_{x=0}^1D_{\mathrm{KL}}\left(p_x^*\middle\|p_{\mathrm{pool}}\right)>0,
\]
because $p_0^*\ne p_1^*$. Therefore, $\widehat{\losspop}$ is constant and strictly suboptimal on $(-\infty,0]$. In particular, every $\theta<0$ is a suboptimal stationary point of the profiled population loss.
\end{proof}

\section{Further details on the experiments}
\label{sec:appendix-experimental-details}

We use four datasets for distinct clinical applications: HEARTS, SUPPORT, METABRIC, and FLCHAIN. HEARTS contains patient-donor matches for predicting post-transplant graft survival from the UNOS patient registry. SUPPORT contains data from hospitalized adults with the goal of predicting time to death. The dataset contains 8,873 patients and 14 clinical covariates. METABRIC contains 1,903 patients with breast cancer and 9 clinical and molecular covariates. FLCHAIN contains 7,871

individuals from a population-based study of serum free light chains and mortality. We use 8 of the 9 baseline covariates, excluding the cause-of-death variable. All experiments are conducted on an M4 Pro processor with 24GB unified memory. Other than the UNOS registry, the other datasets are publicly available in Python through standard libraries. 

For each dataset, we randomly split patients into training, validation, and test sets containing $70\%$, $15\%$, and $15\%$ of the cohort, respectively. All data preprocessing is fixed using the training split only. Continuous covariates are median-imputed when necessary and standardized to zero mean and unit variance. Missing categorical values are represented by a separate category, and categorical covariates are one-hot encoded. The preprocessing steps are applied without modification to the validation and test sets. We fix a random seed of $20260914$.

\begin{table}[h!]
\centering
\small
\resizebox{0.8\linewidth}{!}{
\begin{tabular}{lrrrr}
\toprule
Dataset & $N$ & Covariates &  Censoring rate & Median event time \\
\midrule
HEARTS     & $59,570$ & $95$ & $49.8\%$ & $5.7$ years \\
SUPPORT  & $8,873$  & $14$ & $32.0\%$ & $231$ days \\
METABRIC & $1,903$  & $9$  & $42.0\%$ & $115$ months \\
FLCHAIN  & $7,871$  & $8$ & $72.5\%$ & $4,303$ days \\
\bottomrule
\end{tabular}}
\caption{Details of datasets used in the semi-synthetic experiments.}
\label{tab:datasets}
\end{table}

\subsection{Semi-synthetic data generation}

The original observed survival times and censoring indicators from the datasets are used to fit the survival model that we ultimately use to construct the semi-synthetic data. 

We fit both a standard Cox regression model trained using the standard Cox partial likelihood and the original event and censoring indicators and a neural network model (see~\Cref{sec:appendix-neural}). We select the model using the validation split and then freeze its parameters. The fitted model is used only to define the semi-synthetic data-generating process and is not included among the methods evaluated in our experiments.

Before fitting the baseline hazard, we adjust the log-risk scores by their mean to address the arbitrary offset in the Cox risk score.  Define
\[
c=\frac{1}{|\mathrm{train}|}\sum_{i\in\mathrm{train}} r(\vx_i),
\]
where $r$ is the regression or neural network component of the Cox model. Then compute the ground-truth risk function as
\[
r^*(\vx)=r(\vx)-c.
\]
Subtracting the mean fixes the otherwise arbitrary additive offset in the Cox risk score. We absorb the corresponding shift into the baseline hazard so that the resulting conditional survival distribution is unchanged.

We next estimate the cumulative baseline hazard from the original training data using the Breslow estimator and $r^*(\vx)$, denoted $H_0^*$. Since the Breslow estimator is a step function, we linearly interpolate between the discrete points to obtain a continuous, nondecreasing cumulative hazard. We extrapolate using a positive slope at the final point. This results in a continuous function $H_0^*$ with a computable inverse and with $H_0^*(t)\to\infty$ as $t\to\infty$. Both $r^*$ and $H_0^*$ are fixed and treated as the exact ground-truth model in all subsequent synthetic experiments.

For each patient $\vx_i$, we independently sample $E_i\sim\mathrm{Exponential}(1)$ and generate the latent event time
\[
T_i^* = (H_0^*)^{-1} \left(E_i e^{-r^*(\vx_i)}\right).
\]
The generated outcomes follow the proportional-hazards model defined by
\[
h^*(t\mid\vx) = h_0^*(t)\exp(r^*(\vx)).
\]
In these synthetic experiments, the true event time and conditional survival distribution are known for every patient. Truncation and right censoring are then imposed synthetically based on the generated event times. All models are evaluated on the full, untruncated synthetic test set. This allows us to measure how well each method recovers the underlying risk function and survival distribution rather than relying only on performance against censored observational outcomes.

\subsection{Synthetic truncation and censoring}

We impose synthetic right censoring and truncation based on the generated event time. The censoring mechanism is fixed separately for each dataset, while we parameterize the severity of truncation which we vary in the experiments.

For each dataset, let $r$ denote the fraction of data points that are right-censored in the original data. We independently generate censoring times $C_i$ for each patient $i$ from an exponential distribution. We select the rate of this exponential so that the semi-synthetic training population has the same censoring rate $r$. We then observe
\[ T_i=\min\{T_i^*,C_i\} \text{ and } \delta_i=\mathbb I\{T_i^*\le C_i\}. \]
The censoring distribution is calibrated once for each dataset, and then fixed across the rest of the experiments. 

We implement a multi-interval weighted truncation mechanism with tunable strength. To obtain comparable truncation mechanisms across datasets with different time scales, we define the intervals using quartiles of the event-time distribution in the synthetic training data. Let $c_1,c_2,c_3$ denote the $25$th, $50$th, and $75$th percentiles, and define
\[
I_1=(0,c_1),
I_2=(c_1,c_2),
I_3=(c_2,c_3), \text{ and }
I_4=(c_3,\infty).
\]
Conditional on $T_i^*\in I_k$, patient $i$ is retained independently with probability
$q_k(\zeta)=e^{-\zeta a_k}$,  where we define $a$ for each interval as
$(a_1,a_2,a_3,a_4)=\left(\frac12,\frac32,1,2\right)$.

The parameter $\zeta$ controls the intensity of truncation. When $\zeta=0$, all observations are retained. Increasing $\zeta$ decreases the probability of observing patients in every interval, but at different rates. The coefficients are deliberately nonmonotone in event time. In particular, the second quartile is downsampled more aggressively than the third. For each patient $i$, we retain them independently with probability $q_k(\zeta)$ when $T_i^*\in I_k$.

We run the experiments with $\zeta\in\{0,0.25,0.5,1,1.5,2\}$, corresponding to expected removed percents $\{0, 26, 44, 67, 79, 86\}\%$. We perform 10 independent runs per truncation level. 

\subsection{Projected SGD settings}

We implement~\Cref{alg:minibatch-sgd} with the following hyperparameter settings: learning rate $10^{-3}$, weight decay $10^{-4}$, batch size $512$, at most $500$ epochs with early stopping patience $30$. When the data-generating model is Cox regression, we use the projection set that is the Euclidean ball with radius $5$. When the data-generating model is a neural network, we use the same projection set.

\subsection{Neural network model}
\label{sec:appendix-neural}

For each dataset we augment the Cox model with a neural risk function over the covariates. This model is commonly known as \textit{DeepSurv}~\citep{Katzman18:Deepsurv}. The hazard is  $h(t\mid x)=h_0(t)\exp(r_{\phi}(x))$
where $r_\phi:\mathbb R^d\rightarrow\mathbb R$ is a multilayer perceptron that outputs a scalar. We use two hidden layers of width $64$, with ReLU activations and dropout probability $0.1$, followed by a linear scalar output. 

All neural network models are trained with Adam using learning rate $10^{-3}$, weight decay $10^{-4}$, and batch size $512$, for at most $500$ epochs with early stopping patience $30$. They therefore differ only in whether the loss accounts for the truncation mechanism. Both the standard and truncated model are initialized with identical weights and differ only in the loss. 

The standard neural network ignores the truncation mechanism and minimizes the ordinary negative log-likelihood. For patient $i$, let $z_i=r_\phi(\vx_i)$ and  $S_i(t;\phi)=\exp\left(-e^{z_i}H_0^*(t)\right)$. Up to terms independent of $\phi$, the standard loss is $f_i(\phi) = -\delta_i z_i + e^{z_i}H_0^*(T_i)$.

The truncated neural network instead conditions this likelihood on inclusion. For a single $z_i$, recall the probability of inclusion is $\pi_i(\phi) = \sum_{k=1}^K q_k\left[ S_i(t_{0,k};\phi)-S_i(t_{1,k};\phi) \right]$. The truncated negative log-likelihood loss is then
\[ f_i^{\mathrm{trunc}}(\phi) = \delta_i \left[-z_i+e^{z_i}H_0^*(T_i)\right] - (1-\delta_i)\log\pi_i(T_i;\phi) + \log\pi_i(\phi). \]
This truncated loss is exactly the likelihood of the original truncated Cox model with the linear predictor $\vx_i^\top\vtheta$ replaced by the neural network $r_\phi(\vx_i)$.

\subsection{Enlarging the HEARTS dataset}
\label{appendix:enlarged-hearts}

We augment the HEARTS dataset to construct a synthetic dataset with arbitrarily many samples and covariates. We start by fixing a data-generating linear Cox model as before (after dropping collinear features and applying whitening). Given a target population size $N$ and dimension $d$, we set the number of synthetic features to be the difference between the actual HEARTS features and the target number $d$. We then draw indices $s_1,\dots,s_N$ uniformly with replacement from the real training samples, so each $x_{s_i}$ is a sample from the original HEARTS dataset. We append Gaussian features by drawing $z_j\sim\mathcal{N}(0,1)$ independently for each added covariate. We reuse the regression coefficients for the original features. For the synthetic features we draw a single fixed coefficient $\vtheta_j\sim\mathcal N(0,1)$. We scale the sampled coefficients to unit norm, then rescale it so that the synthetic features contribute half the variance of the risk score from the original features.  We apply censoring and truncation exactly the same as before. 

\section{Known-Baseline Expectation Maximization}
\label{appendix:em}

We compare our first-order optimization method against a known-baseline \textit{expectation maximization (EM)} adaptation of the missing-data approach of~\citet{Rennert22:Cox}. To account for truncation, we use the missing-data approach in which each data point represents an expected number of individuals inversely proportional to its probability of inclusion. At each iteration, the E-step computes this expected number of represented individuals and their expected total transformed event time under the current parameter estimate. The M-step then maximizes the corresponding expected complete-data Cox log-likelihood.

For observation $i$, let $\lambda_i=e^{\vx_i^\top\vtheta}$ and let $\pi_i(\vtheta)$ denote its probability of inclusion. The E-step quantities are
\[
N_i=\frac{1}{\pi_i(\vtheta)}
\]
and
\[
A_i=\delta_iH_0(T_i)+(1-\delta_i)\mu_i(T_i\mid\vtheta)+\frac{1}{\lambda_i\pi_i(\vtheta)}-\mu_i(\vtheta).
\]
Holding these quantities fixed, the M-step maximizes
\[
Q(\vtheta)=\sum_{i=1}^n\left[N_i\vx_i^\top\vtheta-A_i e^{\vx_i^\top\vtheta}\right].
\]
This objective is concave and is optimized using Newton's method. The quantities $\pi_i$, $\mu_i$, and $\mu_i(T_i\mid\vtheta)$ are computed in closed form using the same interval-weighted exponential formulas as our likelihood implementation.

We use the same stopping condition as the first-order methods and the same regularization as in projected SGD. The full procedure is given in~\Cref{alg:known-baseline-em}.

\begin{algorithm}[h]
\caption{Known-baseline EM for truncated Cox regression}
\label{alg:known-baseline-em}
\begin{algorithmic}[1]
\Require Retained data $\{(\vx_i,T_i,\delta_i)\}_{i=1}^n$, known $H_0$
\State Initialize $\vtheta^{(0)}=\mathbf{0}$

\For{$m=0,1,\ldots$}
    \For{$i=1,\ldots,n$}
        \State $\lambda_i\gets e^{\vx_i^\top\vtheta^{(m)}}$, $\pi_i\gets\pi_i(\vtheta^{(m)})$, $N_i\gets1/\pi_i$
        \State $A_i\gets\delta_iH_0(T_i)+(1-\delta_i)\mu_i(T_i\mid\vtheta^{(m)})+\frac{1}{\lambda_i\pi_i}-\mu_i(\vtheta^{(m)})$
    \EndFor
    \State $\vtheta^{(m+1)}\gets\arg\max_{\vtheta}\sum_i\left[N_i\vx_i^\top\vtheta-A_i e^{\vx_i^\top\vtheta}\right]$
    \If{converged}
        \State \Return $\vtheta^{(m+1)}$
    \EndIf
\EndFor
\end{algorithmic}
\end{algorithm}

\section{Omitted tables and figures}

\begin{figure*}[h!]
    \centering
    \begin{subfigure}[b]{0.24\textwidth}
        \centering
        \includegraphics[width=\linewidth]{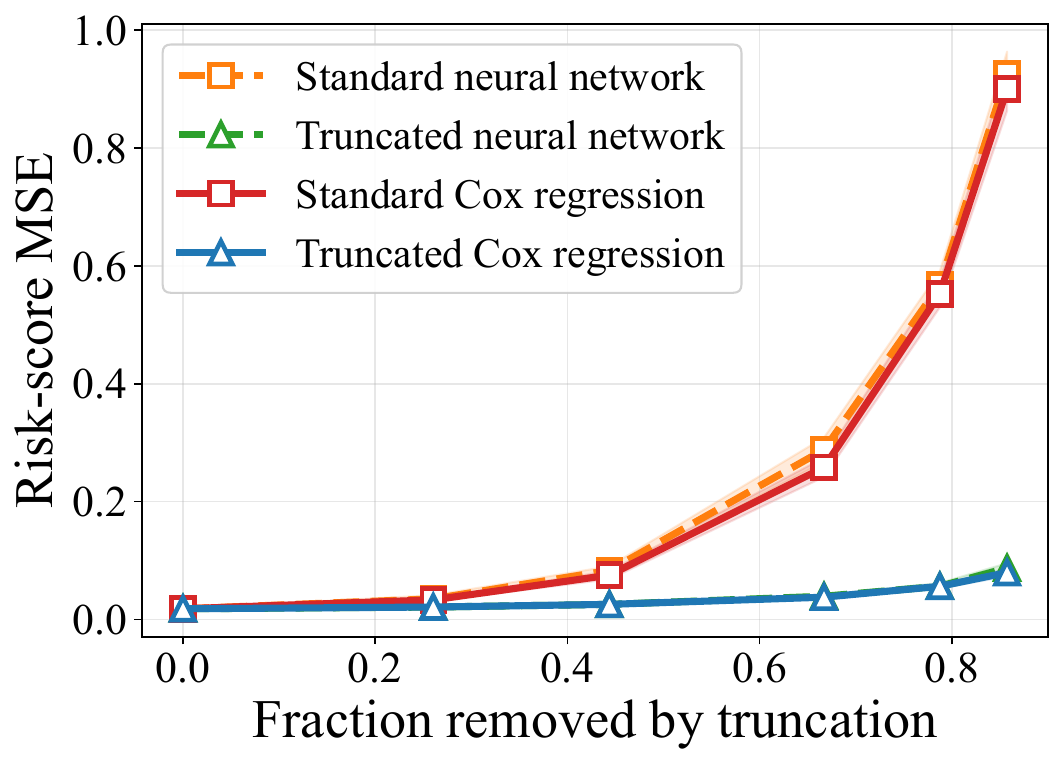}
        \caption{HEARTS.}
        \label{fig:unos-neural}
    \end{subfigure}
    \hfill
    \begin{subfigure}[b]{0.24\textwidth}
        \centering
        \includegraphics[width=\linewidth]{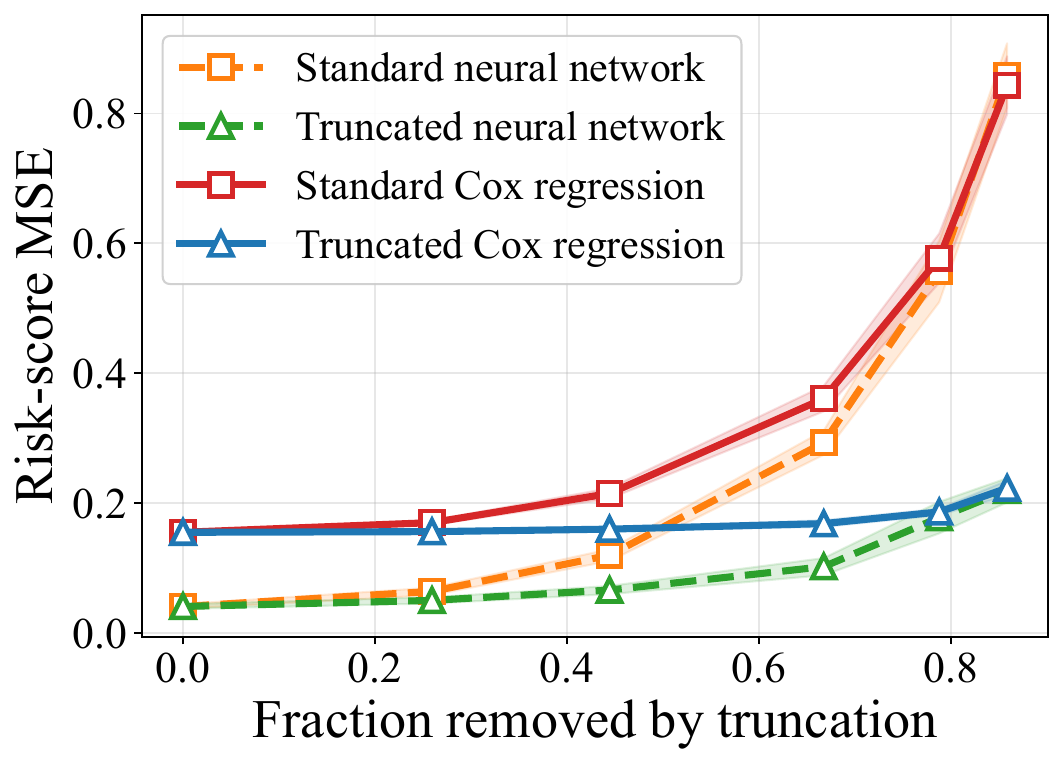}
        \caption{SUPPORT.}
        \label{fig:support-neural}
    \end{subfigure}
    \hfill
    \begin{subfigure}[b]{0.24\textwidth}
        \centering
        \includegraphics[width=\linewidth]{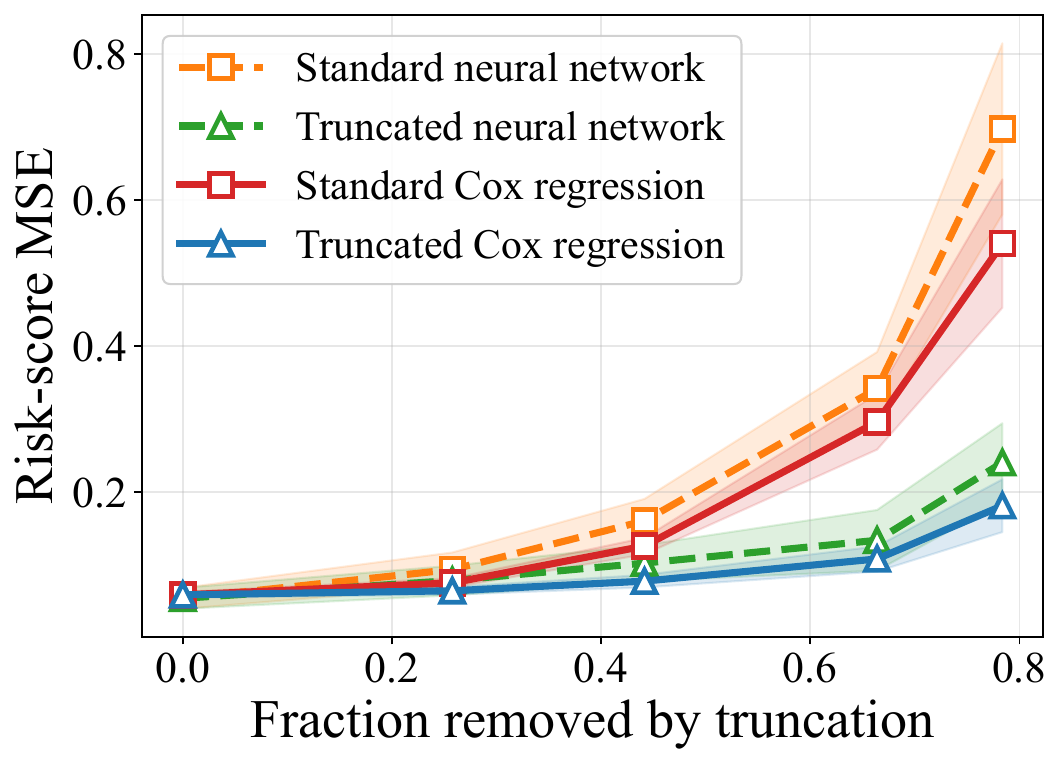}
        \caption{METABRIC.}
        \label{fig:metabric-neural}
    \end{subfigure}
    \hfill
    \begin{subfigure}[b]{0.24\textwidth}
        \centering
        \includegraphics[width=\linewidth]{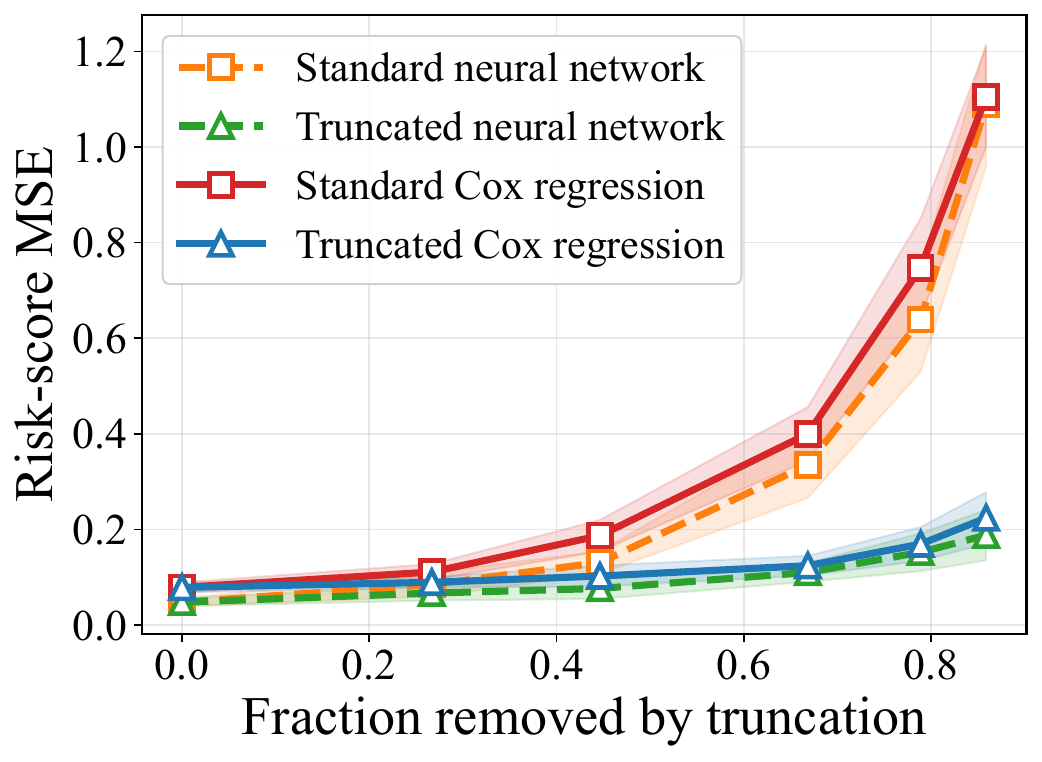}
        \caption{FLCHAIN.}
        \label{fig:flchain-neural}
    \end{subfigure}
    
    \caption{Results of learning standard and truncated Cox regression and neural networks with first-order optimizers across datasets when the data-generating model is a neural network.}
    \label{fig:dataset-mse-neural}
\end{figure*}

\begin{table}[h!]
  \centering
  \resizebox{0.5\linewidth}{!}{
  \begin{tabular}{l rr rr rr}
    \toprule
      & \multicolumn{2}{c}{Adam \textcolor{gray}{(Ours)}}
      & \multicolumn{2}{c}{Projected SGD \textcolor{gray}{(Ours)}}
      & \multicolumn{2}{c}{EM} \\
    \cmidrule(lr){2-3}\cmidrule(lr){4-5}\cmidrule(lr){6-7}
    $\zeta$ & Time (s) & MSE & Time (s) & MSE & Time (s) & MSE \\
    \midrule
     $0.0$  & $61.8$ & $0.0058$ & $38.4$ & $0.0058$ & $4855.9$ & $0.0054$ \\
    $0.25$ & $34.1$ & $0.0078$ & $33.7$ & $0.0078$ & $3127.0$ & $0.0077$ \\
    $0.5$  & $22.8$ & $0.0109$ & $29.0$ & $0.0108$ & $2065.7$ & $0.0107$ \\
    $1.0$  & $16.9$ & $0.0228$ & $14.8$ & $0.0226$ & $1803.2$ & $0.0223$ \\
    $1.5$  & $11.2$ & $0.0426$ & $8.2$ & $0.0411$ & $1740.0$ & $0.0415$ \\
    $2.0$  & $5.6$ & $0.0788$ & $7.7$ & $0.0772$ & $2050.9$ & $0.0755$ \\
    \bottomrule
  \end{tabular}}
  \caption{Optimizer comparison on HEARTS-Large.}
    \label{tab:scalability}
\end{table}

\end{document}